\documentclass[twoside]{article}

\usepackage{graphicx}
\usepackage{amsmath, amsthm, amssymb, amsfonts, bbm, dsfont}
\usepackage{booktabs, caption, tabularx, xcolor, parskip, subcaption}
\usepackage{algorithm, algpseudocode}

\usepackage{xcolor}
\usepackage{soul}

\usepackage[preprint]{aistats2027}

\usepackage[authoryear,round]{natbib}
\usepackage[colorlinks=true, linkcolor=blue]{hyperref}
\usepackage{cleveref}

\newtheorem{theorem}{Theorem}
\newtheorem{lemma}{Lemma}
\newtheorem{corollary}{Corollary}
\newtheorem{assumption}{Assumption}
\newtheorem{specification}{Specification}
\newtheorem{definition}{Definition}

\newcommand{\Dn}{\mathcal{D}_{n}}
\newcommand{\bX}{\textbf{X}}

\newcommand{\bx}{\textbf{x}}

\newcommand{\ind}{\perp\!\!\!\!\perp}
\newcommand{\E}{\mathbb{E}}

\newcolumntype{C}{>{\centering\arraybackslash}X}

\begin{document}

\runningtitle{CausalTreeHFD}

\twocolumn[

\aistatstitle{Explaining Heterogeneous Treatment Effects with Causal Forests and the Hoeffding Functional Decomposition}

\aistatsauthor{ Gaëtan Patinier \And Clément Bénard \And Margaux Törnqvist }

\aistatsaddress{ Theremia \And Thales cortAIx-Labs \And Theremia } ]

\begin{abstract}

The estimation of heterogeneous treatment effects is a central objective of causal inference. However, state-of-the-art algorithms to solve this task are based on machine learning principles, and thus behave as black boxes. Consequently, the identification of covariates driving heterogeneity remains a challenging and open problem. To address this limitation, we introduce CausalTreeHFD, an interpretable extension of causal forests, based on the Hoeffding functional decomposition adapted to tree ensembles. Our approach decomposes predictions of conditional average treatment effects (CATE) into contributions from individual covariates and pairwise interactions, providing a direct characterization of the factors driving treatment effect heterogeneity. We establish the consistency of CausalTreeHFD, which converges towards the Hoeffding decomposition of the underlying CATE function, under standard assumptions for the analysis of causal forests. Experiments on simulated and real datasets show that CausalTreeHFD accurately identifies relevant covariates and interactions, and precisely recovers their contributions to heterogeneous treatment effects.
    
\end{abstract}

\section{INTRODUCTION}\label{sec:intro}

The identification of factors driving heterogeneity in treatment effects is essential for the interpretation of causal inference models in high-stakes settings. For example, in the medical sector, these insights may help to identify patient subgroups, which are the most likely to benefit from a given treatment, and further guide research on specific subtypes of diseases \citep{PATH}. A wide range of methods have been recently introduced to estimate heterogeneous treatment effects (CATE), among which causal forests have become a prominent approach \citep{athey2018generalizedrandomforests, doi:10.1177/20531680231153080, https://doi.org/10.1111/insr.12610}. Despite their strong theoretical guarantees, causal forests are considered as black boxes, since they only provide limited insights into the covariates and interactions, which mostly contribute to their CATE estimates. Improving the interpretability of causal forests is therefore essential to analyze treatment effect heterogeneity, support reliable decision-making, and generate hypotheses about mechanisms of treatment responses.
Although the explainability of machine learning algorithms has raised strong interest in the community over the past decade, most efforts have been dedicated to standard regression and classification problems, leaving causal inference models with few explainability methods.
Our main objective is therefore to contribute to filling this gap, by developing new methods to interpret causal forests.

\subsection{Related Work}

Existing approaches to interpret heterogeneous treatment effect models can be broadly divided into two categories: model-specific methods for causal forests, and model-agnostic explainability methods.

The \textit{grf} package provides several tools to interpret causal forest predictions \citep{athey2018generalizedrandomforests}. Its variable importance measure ranks covariates according to the frequency and depth of their splits within the forest. While computationally inexpensive, such split-based importance measures can favor continuous and categorical covariates with many levels, and underestimate the importance of binary covariates \citep{Gromping01112009}. The \textit{grf} package also implements the best linear projection (BLP), which summarizes treatment effect heterogeneity through a linear projection of doubly robust pseudo-outcomes onto the covariates, useful for estimating linear trends along each covariate. Finally, policy trees identify subgroups with different estimated treatment effects using a decision tree \citep{Sverdrup2020}. Other methods compute variable importance measures by comparing the difference in accuracy between a full model and a model excluding a single covariate \citep{Hines_2025}.

A second line of research considers model-agnostic explainability methods. Among these, SHAP values (SHapley Additive exPlanations) have become one of the most widely used frameworks to attribute predictions to individual variables \citep{NIPS2017_8a20a862, Muschalik_2024, yu2023lineartreeshap}. Its popularity stems from its broad applicability across model classes and the availability of efficient methods. In particular, TreeSHAP \citep{lundberg2020local} is a fast algorithm to compute SHAP values for tree ensembles, and is often used to identify important covariates. However, TreeSHAP cannot be directly computed for causal forests, since their predictions do not take the appropriate form of the sum of single tree predictions. Additionally, SHAP values can become difficult to interpret when covariates are strongly correlated, and the estimated quantities critically rely on the choice of value function and feature distribution, leading to multiple competing notions of feature attributions \citep{sundararajan2020shapleyvaluesmodelexplanation}.

Similarly, the Hoeffding functional decomposition (HFD) quantifies the contributions of covariates to a prediction function through the decomposition of the function into a sum of main effects and interaction terms, defined with respect to the joint distribution of the input covariates.
The HFD is unique and often highly sparse, and therefore provides interpretable representations of multivariate functions.
Unlike SHAP values, the HFD naturally accounts for feature dependence through hierarchical orthogonality constraints.
It is well-known that the HFD is difficult to estimate from a data sample when input covariates are dependent \citep{Hooker01092007, chastaing2012hoeffding}. However, the TreeHFD algorithm was recently introduced to efficiently estimate the HFD of tree ensembles \citep{benard2025treehfd}, suggesting a natural application to causal forests. Nevertheless, TreeHFD also requires an additive form of tree ensembles, which is not the case of causal forests, as already mentioned for TreeSHAP. We will circumvent this limitation as explained in the following subsection. %

\subsection{Contributions}

The goal of the article is to introduce new methods to explain causal forests, using the Hoeffding functional decomposition to break down CATE predictions into contributions from individual covariates and their interactions. Our approach provides faithful and interpretable representations of heterogeneous treatment effects.

More precisely, we introduce two methods to estimate the HFD of heterogeneous treatment effects, based on causal forests.
We first show that gradient-boosting surrogate models can accurately reproduce causal forests predictions, and can then be turned into functional decompositions through TreeHFD. 
Next, we introduce CausalTreeHFD, based on a new estimate derived from causal forests, which takes the form of an additive ensemble of tree predictions. Importantly, we show the consistency of CausalTreeHFD with respect to the theoretical HFD of treatment effects. In particular, we show that covariates that do not contribute to treatment effect heterogeneity have vanishing HFD components asymptotically, ensuring that the decomposition faithfully identifies the variables driving heterogeneity.
Finally, we empirically evaluate these two methods with simulated and real-world datasets, to demonstrate their ability to recover relevant prognostic factors and quantify their contributions to heterogeneous treatment effects, and to show that the introduced methods outperform the existing BLP baseline.

\section{ALGORITHMS} \label{sec:methods}

We first provide the required background definitions, and then introduce two methods to estimate the HFD of treatment effects with causal forests: gradient-boosting surrogate models and CausalTreeHFD.

\subsection{Background Definitions}

We formalize the causal inference framework used throughout the article, and recall the causal forest algorithm and the HFD for tree ensembles.

\subsubsection{Causal Inference Framework}

We adopt the potential outcomes framework of Rubin \citep{rubin}. Hence, we consider the random variables $(\bX, W, Y(0), Y(1))$, where $\bX \in \mathcal{X} \subseteq \mathbb{R}^p$ denotes the \(p \in \mathbb{N}^{\star}\) covariates describing individual characteristics, \(W \in \{0, 1\}\) the treatment assignment, and \(Y(0)\) and \(Y(1)\) denote the potential outcomes under control ($W=0$) and treatment ($W=1$) respectively.
We consider the standard model of partially linear potential outcome, defined by
\begin{equation*}
    \forall w \in \{0, 1\}, \quad Y(w) = m_0(\bX) + w \cdot \tau(\bX) + \varepsilon(w),
\end{equation*}
where \(\mathbb{E}[\varepsilon(0) \mid \bX] = \mathbb{E}[\varepsilon(1) \mid \bX] = 0\). 
Throughout, we write \(\mathcal{S}\) for the power set of \(\{1, \dots, p\}\), and for \(J \in \mathcal{S}\) we denote by \(\bX^{(J)}\) the subvector of \(\bX\) with components in \(J\). Additionally, we assume that the CATE depends only on a subset \(\mathcal{H} \in \mathcal{S}\) of covariates, which we call the set of treatment effect modifiers, so that \(\tau(\bX) = \tau(\bX^{(\mathcal{H})})\). Our objective is then to estimate the CATE, which can also be expressed as 
\[
\tau(\bX^{\mathcal{(H)}}) = \mathbb{E}[Y(1) - Y(0) \mid \bX],
\]
using the i.i.d. sample $\mathcal{D}_n = \{(\bX_i, W_i, Y_i)\}_{i=1}^n$.

Identification of the CATE relies on the standard assumptions of consistency, positivity and unconfoundedness, as formalized below.

\begin{assumption} \label{asm:causal}
Consistency, positivity, and unconfoundedness are satisfied, that is:

(i) \textbf{Consistency.} The observed outcome is $Y = W Y(1) + (1-W)Y(0)$,
which means that the potential outcome under the assigned treatment equals the observed outcome.

(ii) \textbf{Positivity.} For $\bx \in \mathcal{X}$, let \(e(\bx) = \mathbb{P}(W= 1 \mid \bX=\bx)\) denote the propensity score. There exists \(\eta > 0\) such that for all $\bx \in \mathcal{X}$, $\eta < e(\bx) < 1 - \eta$.

(iii) \textbf{Unconfoundedness.} Potential outcomes and treatment assignment are independent conditional on the covariates: $(Y(0), Y(1)) \ind W \mid \bX$.
\end{assumption}

\subsubsection{Causal Forests}

Causal forests estimate the CATE $\tau(\bX^{(\mathcal{H})})$ by solving, at each target point $\bx \in \mathcal{X}$, the local moment equation
\begin{align*} %
    \tau(\bx^{(\mathcal{H})}) \cdot \operatorname{Var}(W \mid \bX = \bx) - \operatorname{Cov}(W, Y \mid \bX  = \bx) = 0.
\end{align*}
To solve this empirically, \textit{grf} builds an adaptive neighborhood around $\bx$ from the training data $\mathcal{D}_n$. Each tree $b \in \{ 1, \dots, B\}$ contributes weights
\[
    \alpha_{ib}(\bx) = \frac{\mathbf{1}\{\bX_i\in L_b(\bx)\}}{n_b(\bx)},
\]
where $L_b(\bx)$ is the leaf of tree $b$ containing $\bx$, and $n_b(\bx)$ is the number of observations of the estimation subsample of tree $b$ in $L_b(\bx)$. The weights $\alpha_{ib}$ are set to $0$ for observations $\bX_i$ outside of the estimation subsample, built by honesty. Averaging over the forest gives the aggregate similarity weights $\smash{\alpha_i(\bx) = \sum_{b=1}^B \alpha_{ib}(\bx) / B}$,
which act as a kernel, giving more weight to observations landing in the same leaf as $\bx$. Writing the weighted local means as $\bar{W_\alpha} = \sum_{i=1}^n \alpha_i(\bx)W_i$ and $\bar{Y_\alpha} = \sum_{i=1}^n \alpha_i(\bx)Y_i$, the local variance and covariance can be estimated by %
\[
\sum_{i=1}^n \alpha_{i}(\bx) (W_i - \bar{W_\alpha})^2, \quad
\sum_{i=1}^n \alpha_{i}(\bx) (W_i - \bar{W_\alpha}) (Y_i - \bar{Y_\alpha}).
\]
Plugging these into the moment equation gives the closed-form \textit{grf} estimator
\begin{equation}\label{eq:grf_formula}
    \tau_{\mathrm{grf}}(\bx) = \frac{\sum_{i=1}^n \alpha_{i}(\bx) (W_i - \bar{W_\alpha})(Y_i - \bar{Y_\alpha})}{\sum_{i=1}^n \alpha_{i}(\bx) (W_i - \bar{W_\alpha})^2}.
\end{equation}
In practice, $W_i$ and $Y_i$ are first residualized against $\bX_i$ using regression forests, to remove confounding effects.

\subsubsection{Hoeffding Functional Decomposition}

The HFD provides an additive decomposition of a multivariate function, with a component associated with each individual variable and possible interaction. 
To formalize the HFD, we need the following assumption about the bounded covariate density, and the covariate support $\mathcal{X}$ assumed to be a hyperrectangle, as it is also standard in the analysis of random forests \citep{scornet2015consistency, wager2017estimationinferenceheterogeneoustreatment}.
This assumption is used by \citet{Stone1994TheUO}, \citet{Hooker01092007}, and \citet{chastaing2012hoeffding} to state the existence and uniqueness of the HFD, as recalled in the theorem below, in the case of treatment effects.
\begin{assumption}\label{asm:three}
     The treatment effect $\tau$ is square-integrable. The random vector \(\mathbf{X}\) takes values in a hyperrectangle \(\mathcal{X} \subseteq \mathbb{R}^p\), and admits a density $f$ with respect to the Lebesgue measure, which is Lipschitz and bounded above and below by strictly positive constants.
\end{assumption}

\begin{theorem}[HFD] \label{thm:hfd}
    If Assumption \ref{asm:three} is satisfied, then there is a unique set of functions $\{\tau^{(J)}\}_{J \in \mathcal{S}}$, such that for all $I \subsetneq J \in \mathcal{S}$, $\mathbb{E}[\tau^{(J)}(\bX^{(J)}) \mid \bX^{(I)}] = 0$, and
    \begin{align*}
        \tau(\bX) = \sum_{J \in \mathcal{S}} \tau^{(J)}(\bX^{(J)}).
    \end{align*}
\end{theorem}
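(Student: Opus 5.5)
We view $L^2(P_{\bX})$ as a Hilbert space and, for each $J\in\mathcal{S}$, let $H_J$ be the closed subspace of square-integrable functions of $\bX^{(J)}$. We then introduce the hierarchically orthogonal subspaces
\[
H_J^0=\{h\in H_J:\ \mathbb{E}[h(\bX^{(J)})\mid \bX^{(I)}]=0 \text{ for all } I\subsetneq J\}.
\]
The constraints in the statement say exactly that $\tau^{(J)}\in H_J^0$. The theorem is therefore equivalent to the claim that $\tau$, which lies in $L^2$ by Assumption \ref{asm:three}, has a unique representation in the direct sum $\sum_{J\in\mathcal{S}} H_J^0$, and that this sum fills $L^2$. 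For $J=\{1,\dots,p\}$ this is the same as $H_{\{1,\dots,p\}}$.

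\textbf{Uniqueness.} Uniqueness reduces to showing that $\sum_J h_J=0$ with $h_J\in H_J^0$ forces every $h_J=0$. In the independent case this is immediate, since the spaces $H_J^0$ are mutually orthogonal. Under dependence, orthogonality holds only along nested pairs $I\subsetneq J$. The key tool is the inequality of \citet{Stone1994TheUO}, also used by \citet{chastaing2012hoeffding}: when the density is bounded above and below on a hyperrectangle, there is a constant $c>0$ with
\[
\Big\|\sum_J h_J\Big\|^2\ \ge\ c\sum_J\|h_J\|^2, \qquad h_J\in H_J^0 .
\]
To prove it, we compare $P_{\bX}$ with the product measure $\mu=\bigotimes_k P_{X_k}$. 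By Assumption \ref{asm:three}, $f$ and the product of its marginals are both bounded above and below, so the two $L^2$ norms are equivalent up to the constant $M_2/M_1$. We then transfer to $\mu$. There we Möbius-project each $h_J$ onto its $\mu$-orthogonal components $\sum_{K\subseteq J}\Pi^\mu_K h_J$ and argue by induction on $|J|$. The hierarchical constraints, together with the density lower bound, give $\|\Pi^\mu_J h_J\|_\mu\ge c_J\|h_J\|$. This follows from a lower bound on the angle between $H_J^0$ and $\sum_{I\subsetneq J}H_I$, which we derive using $f\ge M_1$. Once the inequality holds, uniqueness is immediate, and so is closedness of the sum.

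\textbf{Existence.} The stability inequality implies that $V=\sum_J H_J^0$ is a closed subspace. It therefore suffices to show that $V$ is dense in $L^2$, or equivalently that $V^\perp=\{0\}$. We proceed by induction on the subset lattice, proving that $H_J=\bigoplus_{I\subseteq J}H_I^0$ for every $J$. Assume this holds for all proper subsets of $J$, so that $\sum_{I\subsetneq J}H_I$ is already a closed sum $U_J$. Given $g\in H_J$, take its $L^2(P_{\bX})$-orthogonal projection $g_0$ onto $U_J$. The residual $g-g_0$ lies in $H_J$ and is orthogonal to every $H_I$ with $I\subsetneq J$, so $\mathbb{E}[g-g_0\mid \bX^{(I)}]=0$ and hence $g-g_0\in H_J^0$. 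Decomposing $g_0$ by the induction hypothesis finishes the step. Taking $J=\{1,\dots,p\}$ yields the decomposition of $\tau$.

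\textbf{Main obstacle.} The hard step is the angle bound behind the Stone inequality under dependence. It is what makes $U_J$ closed, which the induction silently needs for the projection onto $U_J$ to exist. I would first attempt it through the norm equivalence with the product measure described above. If the constants become unwieldy, I would instead cite \citet{Stone1994TheUO} and \citet{chastaing2012hoeffding} directly, whose hypotheses are exactly those of Assumption \ref{asm:three}.
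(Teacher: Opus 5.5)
Your argument is correct and is essentially the route of the references the paper defers to for this result; the paper gives no proof of its own and cites \citet{Stone1994TheUO} and \citet{chastaing2012hoeffding}, whose argument is a stability inequality from the two-sided density bound (giving uniqueness and closedness of $\sum_J H_J^0$) followed by a projection-based induction for existence. The step you flag as the main obstacle is in fact immediate: $\Pi^\mu_J h_J$ is the $\mu$-residual of $h_J$ off $U_J$, so $\|\Pi^\mu_J h_J\|_\mu = \mathrm{dist}_\mu(h_J,U_J) \ge \sqrt{m}\,\mathrm{dist}_{P_{\bX}}(h_J,U_J) = \sqrt{m}\,\|h_J\|$ with $m = \inf \prod_k f_k/f > 0$ (using $h_J \perp U_J$ in $L^2(P_{\bX})$), and the peeling must run downward from maximal subsets, because $\Pi^\mu_K(\sum_J h_J)$ only involves the $h_J$ with $J \supseteq K$.
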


The term \(\tau^{(\emptyset)}\) is a constant, while first-order components \(\tau^{(j)}\) capture contributions of individual covariates, and higher-order components capture interactions.
The hierarchical orthogonality conditions stated in Theorem \ref{thm:hfd} ensure that the captured interaction term for a subset $J \subseteq \mathcal{H}$ represents variations of $\tau$ that cannot be explained by any of the covariates or strict subsets of covariates of \(J\). Consequently, the orthogonality conditions also enforce uniqueness of the decomposition, and often lead to a high sparsity. Because of these properties, the HFD provides powerful representations to interpret multivariate functions.

As an example, consider \(\tau(\bX) = X^{(1)} + X^{(1)} X^{(2)}\), where $\textbf{X}=(X^{(1)}, X^{(2)})$ is bivariate Gaussian with unit variances and correlation $\rho$. When $\rho = 0$, we directly have $\tau^{(1)} = X^{(1)}$, $\tau^{(2)} = 0$ and $\tau^{(1, 2)} = X^{(1)} X^{(2)}$. When $\rho \neq 0$, a main effect $\tau^{(2)}(X^{(2)}) = \rho ((X^{(2)})^2 - 1) / (1 + \rho^2)$ appears and is subtracted from $\tau^{(1, 2)}$, together with its symmetric counterpart for $X^{(1)}$---see Table \ref{tab:hfd_B} in Appendix \ref{app:exp}. Indeed, correlated covariates share redundant information, so either one alone already explains part of the interaction; removing these effects ``purifies'' the interaction \citep{lengerich2020purifying} and enforces hierarchical orthogonality. The variance of $\tau^{(1, 2)}$ therefore decreases with $\rho$ and vanishes as $\rho \to 1$, when the two covariates become fully redundant and no interaction term is needed.

While the HFD has several attractive properties, computing its components can be challenging when covariates are dependent, which is almost systematically the case in practice. As mentioned in the introduction, the TreeHFD algorithm was recently developed to estimate the HFD of tree-based models from a data sample \citep{benard2025treehfd}. This method possesses desirable theoretical properties, encouraging its use.
The key idea is to exploit the piecewise-constant structure of decision trees by constructing a Cartesian partition of the hyperrectangle $\mathcal{X}$ induced by the tree splits, as illustrated in Figure \ref{fig:overall_figure}, and formalized in Definition \ref{def:ctp} below, with $\mathcal{S}$ the power set of $\{1, \hdots, p\}$.
The TreeHFD algorithm consists in computing this partition for each tree, and by considering the HFD of the tree with respect to the covariate density averaged over each cell of this partition. With this discretized density, each component of the HFD becomes a piecewise constant function, where each cell of the partition defines a constant region. Then, the decomposition can be parametrized, and finally estimated by the minimization of an appropriate loss function to enforce the orthogonality constraints, as detailed below. %

\begin{definition}[Cartesian tree partitions] \label{def:ctp}
    Under Assumption \ref{asm:three} and given a tree $\tau_{n,b}$ of a causal forest, for each variable $j \in \{1, \dots, p\}$, we denote by $\smash{\mathcal{A}_b^{(j)}}$ the sequence of intervals induced by the splits of the tree along dimension $j$. The Cartesian tree partition of $\tau_{n, b}$ along dimensions $J \in \mathcal{S}$ is defined as the Cartesian product of these sets of intervals: $\smash{\mathcal{A}_b^{(J)} = \bigotimes_{j \in J} \mathcal{A}_b^{(j)}}$.
\end{definition}

\begin{figure}[t]
    \centering
    
    \begin{subfigure}[b]{0.23\textwidth}
        \centering
        \includegraphics[width=\textwidth]{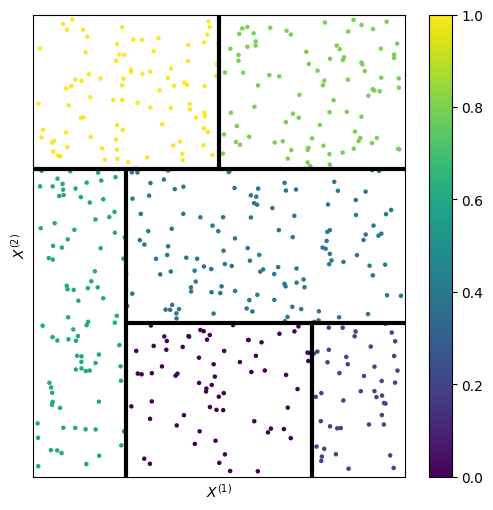}
        \caption{Decision tree splits.}
        \label{fig:subfig_a}
    \end{subfigure}
    \hfill
    \begin{subfigure}[b]{0.23\textwidth}
        \centering
        \includegraphics[width=\textwidth]{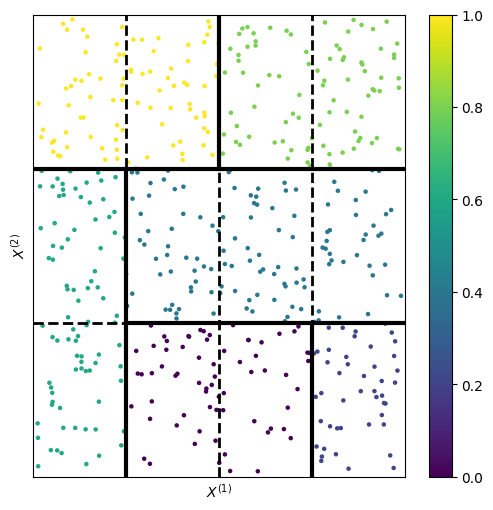}
        \caption{Cartesian partition.}
        \label{fig:subfig_b}
    \end{subfigure}
    
    \caption{The splits of a decision tree along the first two dimensions are extended to define a Cartesian tree partition, along which the TreeHFD component \(\{1, 2\}\) is constant.}
    \label{fig:overall_figure}
\end{figure}

\subsection{Gradient-boosting Surrogate Models}\label{subsec:algorithms}

The standard causal forest $\tau_{\mathrm{grf}}$ first aggregates the weights from all trees and then solves the local moment equation, as formalized in Equation (\ref{eq:grf_formula}) above. Since the aggregation occurs before forming the ratio, $\tau_{\mathrm{grf}}$ cannot be written as an additive ensemble of tree predictions, which prevents the direct application of TreeHFD.
A natural approach to overcome this limitation is to reproduce causal forest predictions using surrogate models built with standard tree ensembles for regression. Therefore, we propose the following procedure, where a causal forest is first fit using the training data $\Dn$. Then, a gradient-boosting model is fit on the generated dataset $\{ (\bX_i, \tau_{\mathrm{grf}}(\bX_i)) \}_{i=1}^n$, and finally TreeHFD is applied to the obtained model to provide the HFD of the treatment effect.
While this method exhibits good performance as shown in the experimental section, it requires building a second tree ensemble, which can be avoided as we will see.

\subsection{CausalTreeHFD}

CausalTreeHFD avoids the distillation of the first approach and proceeds in two steps. First, we construct an additive tree-level estimate of the CATE from the causal forest, that we call the additive causal forest. Second, we apply TreeHFD to this additive representation to obtain a HFD of the CATE. The complete procedure is summarized in \Cref{algo}.

\paragraph{Step 1: Additive Causal Forests.}

To recover an additive tree structure, we define an alternative estimate of the CATE by solving the local moment equation within each tree separately, and averaging the results of each tree. Hence, we define for each tree $b$, %
\begin{equation}\label{eq:taunb}
    \tau_{n,b}(\bx) = \frac{\sum_{i=1}^n \alpha_{ib}(\bx) (W_i - \bar{W}_{\alpha b})(Y_i - \bar{Y}_{\alpha b})}{\sum_{i=1}^n \alpha_{ib}(\bx) (W_i - \bar{W}_{\alpha b})^2}.
\end{equation}
where $\bar{W}_{\alpha b} = \sum_i \alpha_{ib}(\bx) W_i$ and $\bar{Y}_{\alpha b} = \sum_i \alpha_{ib}(\bx) Y_i$ are the local means estimated using only the weights of tree $b$. Hence, each \(\tau_{n, b}(\bx)\) is a separate estimate of the CATE. 
The terminal leaves with a number of observations smaller than a parameter \textit{prune.min.sample} are pruned, with a typical default value of $20$. In practice, this pruning step combined with the initial local centering ensures that the denominator of $\tau_{n,b}(\bx)$ is never null.
We then define the additive causal forest $\tau_n(\bx)$ as the average of these tree estimates, given by $\smash{\tau_n(\bx) = \sum_{b=1}^B \tau_{n, b} (\bx) / B}$.
We will show in \Cref{sec:results} that the additive causal forest \(\tau_n(\bx)\) is consistent.%
\paragraph{Step 2: TreeHFD.}

TreeHFD is then applied to each tree of the ensemble to compute its HFD. Concretely, TreeHFD algorithm reformulates the HFD as the solution of a least squares problem, where the optimized parameters are the values of the decomposition components in each cell of the Cartesian tree partitions \( \{\mathcal{A}_b^{(J)} \}_{J \in \mathcal{S}}\).
More precisely, we denote by $\smash{\mu_{b}^{(J)}}$ the piecewise-constant function over the partition $\smash{\mathcal{A}_b^{(J)}}$, where the value of $\smash{\mu_{b}^{(J)}}$ in each cell is parametrized by a coefficient. Then, CausalTreeHFD is fit using the following cost function for each tree
\begin{align*}
    \mathcal{R}^{\star}_b (\{ &\mu_{b}^{(J)} \}_{J \in \mathcal{S}})
    = \E \Big[\Big(\tau_{n, b}(\bX) - \sum_{J \in \mathcal{S}} \mu_{b}^{(J)}(\bX^{(J)})\Big)^2\Big]
      \\ &+ \sum_{J \in \mathcal{S}} \sum_{j \in J}  \sum_{A \in \mathcal{A}_b^{(J \setminus j)}} \E[\mu_{b}^{(J)}(\bX^{(J)}) \mathds{1}_{\{\bX^{(J \setminus j)} \in A\}} ]^2,
\end{align*}
where the cost function minimized in practice is the empirical counterpart $\mathcal{R}_{n,b}$ of $\mathcal{R}^{\star}_b$, with the expectation estimated through the empirical densities $f_{n,b}$ of the inputs $\bX$, computed over the Cartesian tree partitions with $\Dn$---see Appendix \ref{app:proof_thm5}. Hence we have
\begin{align*}
    \smash{\{\tau_{n, b}^{(J)}\}_{J \in \mathcal{S}}} = \underset{\{ \mu_{b}^{(J)}\}}{\mathrm{argmin}} \hspace*{1mm} \mathcal{R}_{n,b}(\{ \mu_{b}^{(J)}\}_{J \in \mathcal{S}}).
\end{align*}
In practice, only the variable subsets that occur in the tree paths and of size at most $2$ are selected in $\mathcal{R}_{n,b}$, which greatly limits the number of terms, and makes the least squares problems tractable---see Appendix \ref{app:complexity} for the computational complexity. Excluding interactions of order three and higher very often has a negligible impact, as we will see in Section \ref{sec:experiments}.

Since the decomposition is additive, the HFD components of the forest are obtained by averaging the corresponding components across all trees for $J \in \mathcal{S}$,
\[
    \tau_n^{(J)}(\bX^{(J)}) = \frac{1}{B} \sum_{b=1}^B \tau_{n, b}^{(J)}(\bX^{(J)}).
\]
The resulting functions \(\{\tau_n^{(J)}\}_{J \in \mathcal{S}}\) provide an interpretable decomposition of the estimated CATE. In particular, \(\tau_n^{(j)}\) quantifies the contribution of covariate \(\bX^{(j)}\) to treatment effect heterogeneity, while higher-order terms characterize covariate interactions.

\begin{algorithm}[t]
\caption{CausalTreeHFD}\label{algo}

\begin{algorithmic} [1]
\Require Covariates matrix \(\bX\), treatment assignment \(W\) and outcome \(Y\).
\State $\tau_{\mathrm{grf}} \leftarrow \mathtt{CausalForest}(\bX, W, Y)$
\State $\tau_n \leftarrow \mathtt{Additive CausalForest}(\tau_{\mathrm{grf}}, \bX, W, Y)$
\State $(\tau_n^{(J)})_{J \in \mathcal{S}} \leftarrow \mathtt{TreeHFD}(\tau_n, \bX)$

\State \Return $(\tau_n^{(J)})_{J \in \mathcal{S}}$

\end{algorithmic}
\end{algorithm}

\section{THEORETICAL PROPERTIES}\label{sec:results}
We establish the consistency of CausalTreeHFD under standard regularity conditions on the data-generating process, together with standard constraints for the construction of causal forests. 
The first step of our analysis is to show that additive causal forests are consistent, as proved in the first subsection.

\subsection{Consistency of Additive Causal Forests}

We first state the required assumptions and constraints for the construction of causal forests before presenting the convergence result.
\begin{assumption}[Lipschitz conditional mean functions]\label{asm:one}
The mean functions $m_w(\bx) = \E[Y(w) \mid \bX = \bx]$ for $ w\in\{0,1\}$, and the propensity score $e(\bx)=\mathbb P(W=1\mid \mathbf{X}=\mathbf{x})$ are Lipschitz continuous on \(\mathcal X\).
\end{assumption}

\begin{assumption}[Uniformly bounded conditional second moments]\label{asm:two}
There exists a constant \(C>0\) such that for all $\bx \in \mathcal{X}$, 
$\E[Y^2 \mid \mathbf{X}=\mathbf{x}]\le C$.
\end{assumption}

\begin{specification}\label{spec:one}
    The causal forest is constructed using balanced random splits and honest trees. Each feature has a probability bounded away from zero of being selected at every split. Furthermore, the subsample size \(s_n\) satisfies \(s_n \to \infty\), while the minimum leaf size satisfies \(k_n \to \infty\) and $k_n / s_n \to 0$.
\end{specification}

\begin{theorem}\label{th:one}
    Under Assumptions~\ref{asm:causal}--\ref{asm:two} and \Cref{spec:one}, the additive causal forest is consistent, that is
    \begin{align*}
        \tau_n(\mathbf{x}) \xrightarrow[n \to \infty]{p} \tau(\mathbf{x}).
    \end{align*} 
\end{theorem}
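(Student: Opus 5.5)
We reduce the consistency of the additive causal forest to a leaf-level consistency statement for a single honest tree, and then pass to the average over trees. Since $\tau_n(\bx) = \frac{1}{B}\sum_{b} \tau_{n,b}(\bx)$ and the trees are identically distributed conditionally on $\Dn$ (each is built on an independent random subsample with independent auxiliary randomness), it suffices to show that a generic tree estimate satisfies $\tau_{n,b}(\bx) \to \tau(\bx)$ in probability. We also need a uniform integrability-type control so that the average inherits this. For the average itself we would use $\E|\tau_n(\bx)-\tau(\bx)| \le \E|\tau_{n,1}(\bx)-\tau(\bx)|$ by the triangle inequality and exchangeability. Consequently we aim to prove $L^1$ convergence, or at least convergence in probability together with a bound preventing rare explosions, of a single tree.

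Fix one tree and its leaf $L = L_b(\bx)$, containing $n_b(\bx) \ge k_n$ estimation points. The first step is geometric. Under Specification~\ref{spec:one} (balanced random splits, every coordinate split with probability bounded below, $s_n/k_n \to \infty$), we invoke the standard lemma of \citet{wager2017estimationinferenceheterogeneoustreatment}: the diameter of $L$ tends to $0$ in probability, using that the density in Assumption~\ref{asm:three} is bounded above and below. The second step is the decomposition of the leaf estimator. Write $Y_i = m_0(\bX_i) + W_i\tau(\bX_i) + \varepsilon_i$. Then $\tau_{n,b}(\bx) - \tau(\bx)$ equals the ratio of
\[
\sum_i \alpha_{ib}(W_i-\bar W_{\alpha b})\big[(m_0(\bX_i)-\bar m_{\alpha b}) + W_i(\tau(\bX_i)-\tau(\bx)) + \varepsilon_i\big]
\]
to $\sum_i \alpha_{ib}(W_i-\bar W_{\alpha b})^2$. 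Here we used $\sum_i \alpha_{ib}(W_i - \bar W_{\alpha b}) = 0$ to center $m_0$ and to write $W_i\tau(\bX_i) - \tau(\bx)(W_i - \bar W_{\alpha b})$ appropriately. The bias terms are bounded by $C\,\mathrm{diam}(L)\cdot(\text{numerator weights})$ via the Lipschitz Assumption~\ref{asm:one}, because $m_0 = m_0$ and $\tau = m_1 - m_0$ are Lipschitz. They therefore vanish relative to the denominator. By honesty, conditionally on the tree structure the estimation points in $L$ are i.i.d.\ from the conditional law on $L$. The noise term is then a centered average of $n_b(\bx)\ge k_n\to\infty$ terms with bounded conditional second moments (Assumption~\ref{asm:two}), so it is $O_p(k_n^{-1/2})$ by Chebyshev. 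Unconfoundedness (Assumption~\ref{asm:causal}) guarantees $\E[\varepsilon_i \mid \bX_i, W_i]=0$.

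The main obstacle is the denominator. We must show that the within-leaf empirical variance $\sum_i \alpha_{ib}(W_i-\bar W_{\alpha b})^2$ stays bounded away from zero with probability tending to one. This does not follow from anything in the grf averaging, since each tree's ratio is formed separately. We plan to argue that, conditionally on the leaf, it is the empirical variance of $n_b(\bx)\to\infty$ Bernoulli variables whose means $e(\bX_i)$ lie in $(\eta,1-\eta)$ by positivity. Its expectation is then at least $\eta(1-\eta)(1-1/n_b)$, and a Hoeffding bound gives concentration uniformly in the leaf, yielding the lower bound $\eta(1-\eta)/2$ with probability at least $1-2\exp(-c k_n)$.

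To pass from one tree to the average, we control the bad event on which the denominator is small. On its complement, $|\tau_{n,b}|$ is bounded in $L^2$ by Assumption~\ref{asm:two}. The bad event has probability at most $e^{-ck_n}$, and on it the pruning rule keeps $\tau_{n,b}$ well defined. If needed we truncate, or we argue directly in probability by applying Markov's inequality to $\frac{1}{B}\sum_b |\tau_{n,b}-\tau|\mathbf 1_{\text{good}_b}$ and a union-type bound on the average fraction of bad trees. This gives $\tau_n(\bx)\to\tau(\bx)$ in probability.
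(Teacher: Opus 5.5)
Your argument is correct and rests on the same ingredients as the paper's proof: leaf diameters shrinking under balanced random splits, a Lipschitz bias bound, an honesty-based $O(1/k_n)$ conditional variance, positivity for the denominator, and finite $B$. The organization differs, however.

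The paper writes $\tau_{n,b}(\bx)$ as a continuous function of the four leaf averages of $WY$, $W$, $Y$, $W^2$. It shows each converges to its conditional mean at $\bx$ by a bias--variance bound, and concludes by the continuous mapping theorem. The denominator then needs no separate treatment, because its limit is $e(\bx)(1-e(\bx))>\eta(1-\eta)$.

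You instead expand the error through the partially linear model and the identity $\sum_i\alpha_{ib}(W_i-\bar W_{\alpha b})=0$, and control $\bar W_{\alpha b}(1-\bar W_{\alpha b})$ by exponential concentration. This gives a quantitative bound $|\tau_{n,b}(\bx)-\tau(\bx)|\lesssim\mathrm{diam}(L_b(\bx))+O_p(k_n^{-1/2})$ on an event of probability at least $1-2e^{-ck_n}$. That is more than the theorem needs, but it is the right starting point for rates. The paper's route is shorter. It also shows the denominator is not really the main obstacle: $\bar W_{\alpha b}\to e(\bx)$ follows from the same leaf-average argument as the numerator.

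Two small points. Your noise bound implicitly uses $\E[\varepsilon^2\mid\bX,W]\le C/\eta$, which follows from Assumption~\ref{asm:two} together with positivity. Also, the honesty property you actually need is conditional independence of $(W_i,\varepsilon_i)$ given the tree structure and the $\bX_i$'s. It is not that the leaf points are i.i.d.\ from the law restricted to $L$, which fails here because the pruning step uses the estimation-sample inputs.

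Finally, because $B$ is finite, your uniform-integrability and bad-event step is superfluous: a finite average of sequences converging in probability converges in probability. If you wanted $B=B_n\to\infty$, bounding the fraction of bad trees would not suffice, since $\tau_{n,b}$ is unbounded on the bad event. You would need $B_n e^{-ck_n}\to 0$ for a union bound, or a truncation.
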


\Cref{th:one} establishes that additive causal forests preserve the consistency guarantees of causal forests. The proof can be found in Appendix \ref{app:proofs}, and follows from the consistency analysis of generalized random forests by showing that the adaptive neighborhoods induced by the forest shrink to the target point, while the effective sample size within each neighborhood diverges, because the trees are not fully grown according to Specification \ref{spec:one}.

\subsection{Consistency of CausalTreeHFD}

We now focus on the theoretical analysis of the HFD for causal forests. We first present the HFD adapted to causal forests in Theorem \ref{th:two} below, where hierarchical orthogonality constraints are discretized over the tree partitions. Notice that the discretization of the orthogonality constraints is equivalent to the averaging of the covariate density over each cell of the Cartesian tree partition to compute the HFD, which was mentioned above. Under such discretization, there is a unique decomposition with all components piecewise constant over the tree partitions.
The proof follows from Theorem $2$ of TreeHFD article for regression forests \citep{benard2025treehfd}.
\begin{theorem} \label{th:two}
    Let Assumption \ref{asm:three} be satisfied and let $\tau_n = \frac{1}{B} \sum_{b} \tau_{n, b}$ be the additive causal forest of Cartesian tree partitions $\smash{\{\mathcal{A}_b^{(J)}\}_{1 \leq b \leq B, J  \in \mathcal{S}}}$.    
    Then, there exists a unique collection of functions $\smash{\{\tau_{n, b}^{\star (J)}\}_{J \in \mathcal{S}, 1 \leq b \leq B}}$, such that for every tree \(b\), \(\smash{\tau_{n, b}^{\star(J)}}\) is piecewise constant over the partition $\smash{\mathcal{A}_b^{(J)}}$, and satisfies for every \(I \subsetneq J\) and \(A \in \mathcal{A}_b^{(I)}\), $\mathbb{E}[\tau_{n, b}^{\star(J)}(\bX^{(J)}) \mid \Dn, \bX^{(I)} \in A] = 0$, and
    \begin{align*}
        \tau_n = \sum_{J \in \mathcal{S}} \tau_n^{\star (J)}
        \hspace*{1mm} \mathrm{with} \hspace*{1mm} \tau_n^{\star (J)} =  \sum_{b=1}^B \frac{\tau_{n, b}^{\star(J)}}{B}, \tau_{n, b} = \sum_{J \in \mathcal{S}} \tau_{n, b}^{\star(J)}.
    \end{align*}
\end{theorem}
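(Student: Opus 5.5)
The proof works tree by tree, conditionally on $\Dn$. Once each tree $\tau_{n,b}$ has a unique decomposition with the required properties, the forest statement follows by averaging over $b$. So I fix a tree $b$ and work with $\Dn$ fixed. The tree $\tau_{n,b}$ is then a deterministic function that is piecewise constant over its leaves. Since the Cartesian tree partition refines the leaf partition, $\tau_{n,b}$ is also constant on every cell of $\mathcal{A}_b^{(\{1,\dots,p\})}$.

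The key reduction is to a finite probability space. Let $\bar{\bX}$ be the discrete random vector taking the value ``cell $C$'' of the full Cartesian partition $\mathcal{A}_b^{(\{1,\dots,p\})}$ with probability $P(\bX\in C)$. Under Assumption~\ref{asm:three}, the density $f$ is bounded below on the hyperrectangle $\mathcal{X}$, so every cell has strictly positive probability. Coordinate $j$ of $\bar{\bX}$ is the index of the interval of $\mathcal{A}_b^{(j)}$ containing $X^{(j)}$. A function that is piecewise constant over $\mathcal{A}_b^{(J)}$ is exactly a function of $\bar{\bX}^{(J)}$. Moreover, for $A\in\mathcal{A}_b^{(I)}$, the constraint $\E[\tau^{\star(J)}_{n,b}(\bX^{(J)})\mid \bX^{(I)}\in A]=0$ is exactly $\E[g(\bar{\bX}^{(J)})\mid \bar{\bX}^{(I)}=A]=0$. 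The theorem is therefore equivalent to existence and uniqueness of the HFD of the function $\tau_{n,b}$ of $\bar{\bX}$ on a finite product space with a law of full support.

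Existence and uniqueness on this finite space are the classical results of \citet{Stone1994TheUO} and \citet{chastaing2012hoeffding} for dependent inputs with full-support (non-degenerate) joint law. That these cover the present setting is precisely the content of Theorem 2 of \citet{benard2025treehfd}, which I invoke after checking that its hypotheses hold. Its proof uses nothing about how leaf values are produced (regression means there, local moment ratios here), only piecewise constancy over the partition. For completeness, the argument runs as follows. Set $V_J$ to be the functions of $\bar{\bX}^{(J)}$ that are orthogonal to every function of $\bar{\bX}^{(I)}$ for all $I\subsetneq J$. The key fact is that the sum $\bigoplus_J V_J$ is direct and spans all functions of $\bar{\bX}$. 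Directness comes from a positivity argument: if $\sum_J g_J=0$ with $g_J\in V_J$, take $J$ maximal among the nonzero terms and use hierarchical orthogonality together with full support, since the density ratio to the product of marginals is bounded below. Spanning follows from a dimension count, because with a full-support law each $V_J$ has dimension $\prod_{j\in J}(|\mathcal{A}_b^{(j)}|-1)$, and these dimensions sum to $\prod_j|\mathcal{A}_b^{(j)}|$.

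The main obstacle is this directness and non-degeneracy step, and that is where the lower bound on $f$ from Assumption~\ref{asm:three} is essential. With the per-tree components $\tau_{n,b}^{\star(J)}$ in hand, I define $\tau_n^{\star(J)}=\sum_b\tau_{n,b}^{\star(J)}/B$. Then $\tau_n=\sum_J\tau_n^{\star(J)}$ by linearity, and uniqueness of the whole collection follows from uniqueness for each tree.
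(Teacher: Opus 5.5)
Your proposal is correct and follows the paper's route. The paper's proof is a one-line application of Theorem~2 of the TreeHFD article, justified by the fact that the additive causal forest is an additive ensemble of trees that are piecewise constant on their Cartesian partitions, which is exactly the hypothesis check you perform. Your supplementary finite-space argument is also sound and makes that citation self-contained: directness comes from taking a maximal nonzero component and using full support to read $\sum_J g_J = 0$ pointwise on the grid, so the remaining terms reduce to functions of strict subsets of $J$, and spanning comes from the count $\sum_{J}\prod_{j\in J}(|\mathcal{A}_b^{(j)}|-1)=\prod_{j}|\mathcal{A}_b^{(j)}|$.
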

Importantly, the decomposition $\{\tau_{n, b}^{\star (J)}\}_{J \in \mathcal{S}, 1 \leq b \leq B}$ of Theorem \ref{th:two} is not the empirical quantity computed by CausalTreeHFD, but the minimizer of $\mathcal{R}_b^{\star}$ for each tree, that is the theoretical decomposition of the fitted causal forest with respect to the unknown discretized distribution of the covariates. As explained above, CausalTreeHFD uses the data sample $\Dn$ to estimate the decomposition in practice, by minimizing $\mathcal{R}_{n,b}$.

Theorem \ref{th:three} below shows the consistency of the decomposition $\smash{\{\tau_{n, b}^{\star (J)}\}_{J \in \mathcal{S}, 1 \leq b \leq B}}$ with respect to the true HFD of the treatment effect, stated in Theorem \ref{thm:hfd}. This means that when the size of $\Dn$ used to fit the causal forest grows, the HFD decomposition with respect to the discretized distribution gets arbitrarily close to the true HFD. This result is critical to show the efficiency of the discretization method to compute the HFD for causal forests. The proof directly combines our Theorem \ref{th:one} and Corollary $1$ of TreeHFD article.
\begin{theorem}\label{th:three}
    If Assumptions~\ref{asm:causal}--\ref{asm:two} and \Cref{spec:one} are satisfied, and trees are bounded, then the discretized HFD of the fitted additive causal forest is consistent, that is, for all $J \in \mathcal{S}$, %
    \[
       \mathbb{E}[(\tau_n^{\star (J)}(\bX^{(J)}) - \tau^{(J)}(\bX^{(J)}))^2] \xrightarrow[n \to \infty]{} 0.
    \]
\end{theorem}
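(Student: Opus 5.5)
The plan is to reduce Theorem \ref{th:three} to Corollary $1$ of the TreeHFD article \citep{benard2025treehfd}, with Theorem \ref{th:one} as the main input. Roughly, that corollary says: if a tree ensemble with Cartesian partitions is bounded and converges to a square-integrable target in $L^2(\mathbb{P}_{\bX})$, and its cells shrink, then the discretized HFD components of Theorem \ref{th:two} converge in $L^2$ to the true HFD components of the target. It then suffices to check that the additive causal forest $\tau_n = \frac{1}{B}\sum_b \tau_{n,b}$ meets these hypotheses with target $\tau$. The conclusion would follow for every $J \in \mathcal{S}$ directly from the corollary.

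\textbf{Step 1: from pointwise to $L^2$ consistency.} Theorem \ref{th:one} gives $\tau_n(\bx) \to \tau(\bx)$ in probability for each fixed $\bx$. Since the trees are assumed bounded, $|\tau_n| \le M$ uniformly. Assumptions \ref{asm:one} and \ref{asm:three} make $\tau = m_1 - m_0$ Lipschitz on a compact hyperrectangle, hence bounded. Dominated convergence first gives $\E[(\tau_n(\bx)-\tau(\bx))^2] \to 0$ for each $\bx$. Integrating over an independent $\bX \sim f$ and applying Fubini with dominated convergence a second time gives $\E[(\tau_n(\bX)-\tau(\bX))^2] \to 0$, where the expectation is over both $\Dn$ and $\bX$.

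\textbf{Step 2: the structural hypotheses.} Here I verify what the TreeHFD corollary requires on the partitions. The corollary is proved under the random-forest analysis framework: balanced random splits, every feature selected with probability bounded away from zero, and $k_n/s_n \to 0$. Under these conditions, the argument of \citet{wager2017estimationinferenceheterogeneoustreatment} shows that leaf diameters, and hence the diameters of cells of the Cartesian partitions $\mathcal{A}_b^{(J)}$ along each coordinate, vanish in probability. These are exactly the conditions of Specification \ref{spec:one}. Because the density $f$ is Lipschitz and bounded away from zero (Assumption \ref{asm:three}), the discretized density averaged over the cells converges to $f$. This is what makes the discretized orthogonality constraints of Theorem \ref{th:two} approach the exact constraints of Theorem \ref{thm:hfd}.

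\textbf{Step 3: the main obstacle.} I expect the difficulty to be checking that Corollary $1$ of the TreeHFD article, stated for regression forests, applies to $\tau_n$, whose leaf values are ratios of local covariances rather than leaf means. My approach is to note that the corollary's proof uses only two properties of the leaf values. First, each $\tau_{n,b}$ is piecewise constant on its tree partition, so it is piecewise constant on the finer Cartesian partition. Second, the ensemble is bounded and $L^2$-consistent, which Step 1 provides. The HFD operator relative to the discretized measure is a bounded linear projection, uniformly in $n$ thanks to the density bounds, applied to $\tau_n$. I would therefore split the error as
\[
\tau_n^{\star(J)} - \tau^{(J)} = \bigl(\Pi_n^{(J)}\tau_n - \Pi_n^{(J)}\tau\bigr) + \bigl(\Pi_n^{(J)}\tau - \tau^{(J)}\bigr),
\]
where $\Pi_n^{(J)}$ denotes the discretized HFD projection. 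The first term is controlled by Step 1 together with the uniform operator bound. The second term vanishes because the cells shrink (Step 2), and this convergence is exactly what the TreeHFD corollary establishes.
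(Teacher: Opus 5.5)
Your reduction is the paper's proof: Theorem \ref{th:one}, together with boundedness of $\tau$ and of the trees, gives $\E[(\tau_n(\bX)-\tau(\bX))^2]\to 0$, and then Corollary 1 of the TreeHFD article is invoked. Your Step 1 writes out the dominated-convergence argument that the paper states in one line. Your answer to the Step 3 worry is also the paper's: only the additive, piecewise-constant tree-ensemble form of $\tau_n$ matters, not how leaf values are computed (cf.\ the proof of Theorem \ref{th:two}).

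The step to correct is the split in Step 3. There is no single discretized measure for the ensemble. By Theorem \ref{th:two}, each tree is decomposed with respect to the density averaged over its own Cartesian partition. So $\tau_n^{\star(J)} = \frac{1}{B}\sum_{b}\Pi_{n,b}^{(J)}\tau_{n,b}$ with tree-dependent operators, which is not $\Pi_n^{(J)}\tau_n$ for any one linear map. Done tree by tree, your first term becomes $\frac{1}{B}\sum_b \Pi_{n,b}^{(J)}(\tau_{n,b}-\tau)$, and Step 1 no longer suffices. You need $L^2$-consistency of every tree, not only of their average. This is available, since the proof of Theorem \ref{th:one} establishes consistency tree by tree and the trees are bounded.

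The cleaner repair is to pivot on the HFD operator $\Pi^{(J)}$ with respect to the true density $f$, a single bounded linear map common to all trees. This is how Theorem \ref{thm:causaltreehfd_bound} and the TreeHFD bound it adapts are organized:
\[
\tau_n^{\star(J)} - \tau^{(J)} = \frac{1}{B}\sum_{b=1}^{B}\bigl(\Pi_{n,b}^{(J)} - \Pi^{(J)}\bigr)\tau_{n,b} + \Pi^{(J)}(\tau_n - \tau).
\]
Ensemble $L^2$-consistency then handles the second term. The first is handled by a per-tree density-perturbation bound of the type of Lemma \ref{lemma:bound_tree_HFD}, with the cell-averaged density in place of $f_{n,b}$.

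That bound is in terms of a sup-norm discretization error. Through the Lipschitz continuity of $f$, it therefore requires the largest Cartesian cell diameter to vanish. Lemma \ref{lemma:diam} and the Wager--Athey argument you cite only control the leaf containing a fixed $\bx$. The paper also leaves this point implicit in its appeal to Corollary 1.
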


Our main results provide theoretical properties of the decomposition estimated by CausalTreeHFD. The above Theorems \ref{th:two} and \ref{th:three} are derived from the theory developed in TreeHFD article, which does not provide the convergence of empirical estimates output by TreeHFD to the true HFD for regression problems.
In fact, it is possible to further deepen the theoretical analysis of CausalTreeHFD using the following key property: TreeHFD computes the HFD of each tree by minimizing a cost function, and the minimum of this cost function is exactly the Hoeffding decomposition of the tree with respect to the empirical density estimated with the Cartesian tree partition, provided that the empirical density is bounded, as enforced by the following Specification \ref{spec:causaltreehfd}. Therefore, we can state a finite sample bound on the quadratic error of CausalTreeHFD estimates $\smash{\{\tau_n^{(J)}\}}$, using
\begin{align*}
    \Delta_{n, f} = \sup_{1 \leq b \leq B} \sup_{\bx \in \mathcal{X}} |f(\bx) - f_{n,b}(\bx)|,
\end{align*}
which quantifies the error of the empirical density $f_{n,b}$ with respect to the true density $f$ of $\bX$---see the proof in Appendix \ref{app:proof_thm5}.

\begin{specification}\label{spec:causaltreehfd}
    For each tree of the additive causal forest, the empirical density computed over the Cartesian tree partition is truncated from above and below by positive constants and normalized.
    Trees are also truncated by an arbitrarily large constant, and CausalTreeHFD loss is built with variable subsets of all sizes and with the truncated empirical density for each tree.
\end{specification}

\begin{theorem} \label{thm:causaltreehfd_bound}
    Let Assumption \ref{asm:three} and 
    Specification \ref{spec:causaltreehfd} be satisfied.
    Then, there exist two constants $K_1, K_2 > 0$, such that for any causal forest $\tau_n = 1 / B \sum_{b} \tau_{n, b}$, the CausalTreeHFD estimate of $\tau_n$ satisfies for all $J \in \mathcal{S}$, 
    \begin{align*}
        \E[(\tau_n^{(J)}(\bX^{(J)}&) - \tau^{(J)}(\bX^{(J)}))^2] \leq K_1 \E[( \tau(\bX) - \tau_n(\bX) )^2] \\ &+ K_2 \E\Big[ \Delta_{n, f}^2 \Big(\frac{1}{B} \sum_{b=1}^B \sqrt{\E[\tau_{n, b}(\bX)^2|\Dn]}\Big)^2 \Big].
    \end{align*}
\end{theorem}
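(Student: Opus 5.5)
We decompose the error through an intermediate object: the HFD of $\tau_n$ taken with respect to the true density $f$. For each tree $b$, let $\{\tilde\tau_{n,b}^{(J)}\}_{J\in\mathcal{S}}$ be the exact Hoeffding decomposition of $\tau_{n,b}$ with respect to $f$, given by Theorem \ref{thm:hfd} applied to the bounded function $\tau_{n,b}$. Average these to get $\tilde\tau_n^{(J)}=\frac1B\sum_b\tilde\tau_{n,b}^{(J)}$. By linearity and uniqueness of the HFD, $\{\tilde\tau_n^{(J)}\}$ is exactly the HFD of $\tau_n$ with respect to $f$. We then split
\[
\tau_n^{(J)}-\tau^{(J)}=(\tau_n^{(J)}-\tilde\tau_n^{(J)})+(\tilde\tau_n^{(J)}-\tau^{(J)}),
\]
and use $(a+b)^2\le 2a^2+2b^2$. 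The second term will give the $K_1$ term and the first the $K_2$ term.

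For the second term, the HFD map $g\mapsto g^{(J)}$ under a fixed density $f$ is linear. Under Assumption \ref{asm:three} it is also bounded in $L^2(f)$: the density is bounded above and below, so the standard results of \citet{Stone1994TheUO} and \citet{chastaing2012hoeffding} give an angle (frame) condition $\sum_J \E[g^{(J)}(\bX^{(J)})^2]\le C_f\,\E[g(\bX)^2]$, with $C_f$ depending only on the bounds of $f$ and on $p$. Applying this to $g=\tau_n-\tau$ conditionally on $\Dn$, then taking expectations, gives $\E[(\tilde\tau_n^{(J)}-\tau^{(J)})^2]\le C_f\E[(\tau_n-\tau)^2]$. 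This uses only that the HFD of a difference is the difference of HFDs; $\tau_n$ is square integrable because trees are truncated (Specification \ref{spec:causaltreehfd}).

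For the first term, we work tree by tree, conditionally on $\Dn$. By the key property stated before the theorem, under Specification \ref{spec:causaltreehfd} the minimizer of $\mathcal{R}_{n,b}$ is exactly the Hoeffding decomposition of $\tau_{n,b}$ with respect to the truncated, normalized empirical density $f_{n,b}$. Since $\tau_{n,b}$ is piecewise constant on the Cartesian partition, Theorem \ref{th:two} (with $f_{n,b}$ in place of $f$) makes this the exact HFD under $f_{n,b}$. The task is therefore a perturbation bound: for one function $g=\tau_{n,b}$, compare its HFDs under two densities $f$ and $f_{n,b}$, both bounded above and below by fixed constants, and show
\[
\E\big[(g^{(J)}_{f_{n,b}}-g^{(J)}_f)^2\big]^{1/2}\le C\,\|f-f_{n,b}\|_\infty\,\E[g(\bX)^2]^{1/2}.
\]
To prove this, we characterize the HFD under a density $h$ as the solution of a linear system: the projection of $g$ onto the hierarchically orthogonal subspaces, or equivalently the first-order conditions of the cost $\mathcal{R}^\star_b$ written with density $h$. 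Perturbing $h$ from $f$ to $f_{n,b}$ perturbs the operators (conditional expectations and inner products) by $O(\|f-f_{n,b}\|_\infty)$ in operator norm, since both densities are bounded below. The solution operator is uniformly bounded thanks to the angle condition, with constants depending only on the density bounds and not on the partition. A resolvent-type identity, of the form $S_{f_{n,b}}-S_f=S_{f_{n,b}}(L_f-L_{f_{n,b}})S_f$, then gives the bound. We also need to pass between the $L^2(f)$ and $L^2(f_{n,b})$ norms, which are equivalent up to constants. \textbf{This uniform-in-partition stability of the HFD under density perturbation is the main obstacle.} We would try to obtain it from the Stone angle constant, which depends only on the density bounds, applied uniformly over the discretized (cellwise-averaged) densities.

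Finally, we combine. The perturbation bound and the triangle inequality over trees give
\[
|\tau_n^{(J)}-\tilde\tau_n^{(J)}|_{L^2}\le \frac1B\sum_b C\,\Delta_{n,f}\sqrt{\E[\tau_{n,b}(\bX)^2\mid\Dn]},
\]
using the supremum over $b$ built into $\Delta_{n,f}$. Squaring and taking the expectation over $\Dn$ yields the $K_2$ term, and the constants are $K_1=2C_f$ and $K_2=2C^2$.
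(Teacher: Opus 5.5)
Your architecture matches the paper's, which follows Theorem 6 of the TreeHFD article. You pass through the HFD of $\tau_n$ under the true density $f$, i.e.\ the average of the per-tree $f$-HFDs $\mu_{n,b}^{(J)}$. The angle condition gives the $K_1$ term. The $K_2$ term comes from a per-tree bound $\sum_J\E[(\tau_{n,b}^{(J)}-\mu_{n,b}^{(J)})^2\mid\Dn]\le K\Delta_{n,f}^2\,\E[\tau_{n,b}(\bX)^2\mid\Dn]$ (the paper's Lemma \ref{lemma:bound_tree_HFD}) plus the triangle inequality over trees. Your combination steps are fine.

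\textbf{The gap.} The per-tree perturbation bound is the real content of the theorem, and you leave it open. The route you sketch also does not follow from the angle condition as you claim. Write $L_h\mu=(\sum_J\mu^{(J)},\,(\E_h[\mu^{(J)}\mid\bX^{(I)}])_{I\subsetneq J})$. Its first block does not depend on $h$, so in $S_{f_{n,b}}-S_f=S_{f_{n,b}}(L_f-L_{f_{n,b}})S_f$ the argument $(L_f-L_{f_{n,b}})S_f g$ is pure constraint residual: it records how much the $f$-HFD violates the $f_{n,b}$-orthogonality conditions. The angle condition only bounds the solution map on consistent data, meaning a target function with zero constraint residuals. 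To bound it on residual data you need a separate coercivity estimate, uniform in $h$:
$\sum_J\|\mu^{(J)}\|^2\le C\bigl(\|\sum_J\mu^{(J)}\|^2+\sum_J\sum_{j\in J}\|\E_h[\mu^{(J)}\mid\bX^{(J\setminus j)}]\|^2\bigr)$.
This can likely be derived from the Stone constant with extra work, but you have not supplied it.

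Two smaller points on this step. First, uniformity over partitions, which you single out, is not the real difficulty: Specification \ref{spec:causaltreehfd} truncates $f_{n,b}$ to fixed bounds, and the angle constant depends only on those bounds and the dimension. Second, $\mathcal{R}^\star_b$ is not an equivalent characterization for your purpose. Its minimizer is the discretized HFD of Theorem \ref{th:two}, not the exact $f$-HFD you use as intermediate object, and the latter is not even piecewise constant on the partition.

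\textbf{The missing idea.} It is the reweighting identity of Lemma \ref{lemma:HFD_equivalence}, which never inverts on inconsistent data.
\begin{itemize}
\item The orthogonality conditions read $\int\mu^{(J)}f^{(J)}\,dx^{(J\setminus I)}=0$, so the functions $\mu_{n,b}^{(J)}f^{(J)}/f_{n,b}^{(J)}$ satisfy them exactly under $f_{n,b}$.
\item These functions sum to $\tau_{n,b}+\delta_{n,b}$, with $\delta_{n,b}=\sum_J\mu_{n,b}^{(J)}(f^{(J)}/f_{n,b}^{(J)}-1)$.
\item Uniqueness of the $f_{n,b}$-HFD then gives $\tau_{n,b}^{(J)}=\mu_{n,b}^{(J)}f^{(J)}/f_{n,b}^{(J)}-\delta_{n,b}^{(J)}$, hence $\tau_{n,b}^{(J)}-\mu_{n,b}^{(J)}=\mu_{n,b}^{(J)}(f^{(J)}/f_{n,b}^{(J)}-1)-\delta_{n,b}^{(J)}$.
\item Since both marginals are bounded below, $|f^{(J)}/f_{n,b}^{(J)}-1|\le c\,\Delta_{n,f}$.
\item Two applications of the angle condition, each on consistent data (to $\tau_{n,b}$ under $f$, and to $\delta_{n,b}$ under $f_{n,b}$), then yield Lemma \ref{lemma:bound_tree_HFD}.
\end{itemize}
The trick moves the density error out of the constraints and into the target function, which is exactly where the angle condition applies.

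Finally, you take as given that the $\mathcal{R}_{n,b}$ minimizer is the $f_{n,b}$-HFD. The paper proves this in Lemma \ref{lemma:emp_hfd}: the loss vanishes at that HFD, and only there, because all variable subsets are included and every cell carries positive truncated mass.
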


This bound on the quadratic error of CausalTreeHFD depends on two terms: the error of the additive causal forest to estimate the true CATE, and the error of the empirical distributions, quantified by $\Delta_{n,f}$.
The following corollary states the consistency of CausalTreeHFD, since Theorem \ref{th:one} about the additive causal forest consistency ensures that the first term vanishes, and we assume that $\E[\Delta_{n, f}] \to 0$ for the second term. Notice that this assumption essentially requires a pruning of the tree partitions to enforce that the number of observations in each cell of the Cartesian tree partitions grows to infinity and that the cell diameters vanish, to enforce a high accuracy of the empirical distributions.
\begin{corollary} \label{cor:convergence}
    If Assumptions~\ref{asm:causal}--\ref{asm:two} and 
    Specifications~\ref{spec:one}--\ref{spec:causaltreehfd} are satisfied, and $\E[\Delta_{n, f}] \to 0$, then we have for $J \in \mathcal{S}$
    \begin{align*}
        \E[(\tau_n^{(J)}(\bX^{(J)}) - \tau^{(J)}(\bX^{(J)}))^2] \xrightarrow[n \to \infty]{} 0.
    \end{align*}
    In particular, if $j \notin \mathcal{H}$, then for all $J \in \mathcal{S}$ such that $j \in J$, we have
    \begin{align*}
        \E[\tau_n^{(J)}(\bX^{(J)})^2] \xrightarrow[n \to \infty]{} 0.
    \end{align*}
\end{corollary}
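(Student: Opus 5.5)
The plan is to apply Theorem \ref{thm:causaltreehfd_bound} and show that both terms on its right-hand side vanish as $n \to \infty$. The second claim will then follow from the first by identifying the true HFD components that involve a non-modifier covariate.

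\textbf{First term.} Theorem \ref{th:one} gives $\tau_n(\bx) \to \tau(\bx)$ in probability for each fixed $\bx$. To upgrade this to $\E[(\tau(\bX) - \tau_n(\bX))^2] \to 0$ I would argue as follows:
\begin{itemize}
\item By Specification \ref{spec:causaltreehfd}, each tree $\tau_{n,b}$ is truncated by a constant $M$, so $|\tau_n| \le M$.
\item If $M$ is chosen at least $\sup|\tau|$, then $\tau$ is bounded by $M$ as well. This holds since $\tau$ is Lipschitz by Assumption \ref{asm:one} on a compact hyperrectangle. Truncation then never increases the error.
\item Hence $(\tau_n(\bX) - \tau(\bX))^2 \le 4M^2$, and convergence in probability at each point implies $\E[(\tau_n(\bx) - \tau(\bx))^2] \to 0$ for every $\bx$ by bounded convergence.
\item Integrating over $\bx$ against $f$ (Fubini, again with the uniform bound) gives the claim.
\end{itemize}
One subtlety is that Theorem \ref{th:one} is stated for the untruncated forest, and truncating each tree before averaging is not the same as truncating the average. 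I would handle this by noting that pointwise consistency of each $\tau_{n,b}$ is what the proof of Theorem \ref{th:one} actually controls. Truncation at level $M \ge \sup|\tau|$ is a contraction toward $\tau$, so the truncated trees remain consistent.

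\textbf{Second term.} Using the same bound, $\sqrt{\E[\tau_{n,b}(\bX)^2 \mid \Dn]} \le M$, so the second term is at most $K_2 M^2 \E[\Delta_{n,f}^2]$. Because the densities are truncated above and below (Specification \ref{spec:causaltreehfd}) and $f$ is bounded (Assumption \ref{asm:three}), $\Delta_{n,f}$ is bounded by a constant $D$. Therefore $\E[\Delta_{n,f}^2] \le D\, \E[\Delta_{n,f}] \to 0$ by hypothesis. This proves the first display.

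\textbf{Second claim.} Take $j \notin \mathcal{H}$. Since $\tau$ depends only on $\bX^{(\mathcal{H})}$, the HFD of $\tau$ is supported on subsets of $\mathcal{H}$. The key step is to show that $\tau^{(J)} = 0$ whenever $j \in J$. This is the main obstacle, because with dependent covariates it is not completely automatic. The approach is the following:
\begin{itemize}
\item Apply Theorem \ref{thm:hfd} to $\tau$ viewed as a function of $\bX^{(\mathcal{H})}$, whose density is the marginal of $f$ and satisfies the same bounds.
\item Extend that decomposition by zero components for every $J \not\subseteq \mathcal{H}$.
\item Check hierarchical orthogonality with respect to the full distribution. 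For $J \subseteq \mathcal{H}$ and $I \subsetneq J$, the conditional expectation $\E[\tau^{(J)} \mid \bX^{(I)}]$ involves only $\bX^{(\mathcal{H})}$, so it is unchanged and equals zero. The zero components trivially satisfy the constraints.
\item By uniqueness in Theorem \ref{thm:hfd}, this extended decomposition is the HFD of $\tau$. Hence $\tau^{(J)} = 0$ whenever $J \ni j$.
\end{itemize}
Substituting $\tau^{(J)} = 0$ into the first claim yields $\E[\tau_n^{(J)}(\bX^{(J)})^2] \to 0$.
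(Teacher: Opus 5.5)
Your proposal is correct and follows essentially the same route as the paper. Both apply Theorem~\ref{thm:causaltreehfd_bound}, use Theorem~\ref{th:one} with bounded truncated trees for the first term and the boundedness of $\Delta_{n,f}$ for the second, and get the second claim by extending the HFD of $\tau(\bX^{(\mathcal{H})})$ with zero components and invoking uniqueness; your extra care in choosing the truncation level $M \ge \sup|\tau|$ and upgrading pointwise consistency to $\mathrm{L}^2$ by dominated convergence fills in details the paper leaves implicit.
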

The second part of Corollary \ref{cor:convergence} highlights that the covariates that are not involved in the heterogeneity of the treatment effect can be identified by CausalTreeHFD, since the estimated components converge to the null function. The proof is also in Appendix \ref{app:proof_thm5}.

\section{EXPERIMENTS}\label{sec:experiments}

We evaluate the methods through both the ability to reconstruct the CATE predictions of a causal forest and the ability to recover the variables driving treatment effect heterogeneity.
In our experiments, we compare four methods: the gradient-boosting surrogate model denoted by $\tau_{\mathrm{xgb}}$, CausalTreeHFD denoted by $\tau_n$, the TreeSHAP decomposition of the additive causal forest $\tau_{\mathrm{shap}}$, and the existing BLP baseline $\tau_{\mathrm{BLP}}$. We elaborate on the implementation of these comparative models in Appendix~\ref{app:exp}.

\subsection{Experiment A}
We first consider a synthetic setting in which the treatment effect heterogeneity is driven by a second-order interaction between two covariates.

The covariates are generated independently as \(\bX \sim \mathcal{N}(0, I_8)\) and the data-generating process is defined by
\begin{align*}
     e(\bX) &= 0.4+0.2\cdot\mathbf 1\{X^{(1)}\ge 0\} \\
     m_0(\bX) &= (X^{(3)} X^{(4)})^2 \\
     \tau(\bX) &=  \mathbf{1}\{X^{(1)} \geq 0, X^{(2)} \geq 0\} \\
     Y\mid \bX, W &\sim \mathcal{N}(m_0(\bX) + W\cdot\tau(\bX), 0.5^2).
\end{align*}
Correctly estimating the CATE in this setting is not trivial as the ratio of treatment heterogeneity variance to outcome variance \(\operatorname{Var}[\tau(\bX)] / \operatorname{Var}[Y]\) is less than 3\%. The HFD of the CATE consists of two first-order effects and one second-order interaction, while all remaining components are null. The exact decomposition is given in Appendix~\ref{app:exps}.

We train a causal forest with \(n = 5000\) observations, \(B=500\) trees, across \(50\) Monte Carlo iterations. \Cref{tab:metrics} shows the explained variance on testing data (\(R^2\)) between each algorithm and the causal forest predictions \(\tau_\mathrm{grf}\), as well as the \(R^2\) between each algorithm and the true treatment effect \(\tau\), while \Cref{fig:fullexps} reports the \(R^2\) between the estimated and true non-zero HFD components. The two HFD methods and SHAP strongly outperform the BLP baseline, and exhibit comparable performance across all metrics, with no clear differences in their ability to recover the true HFD components. This is quite expected, since input covariates are independent, and the HFD and SHAP coincide in this setting \citep{bordt2023shapley}. 
\Cref{tab:metrics} also reports a measure of the orthogonality constraints between components of the decompositions---see details in Appendix~\ref{app:metrics}. In this simple independent setting, the three decompositions are almost hierarchically orthogonal.
\begin{figure*}[t]
    \centering
    \includegraphics[width=.9\linewidth]{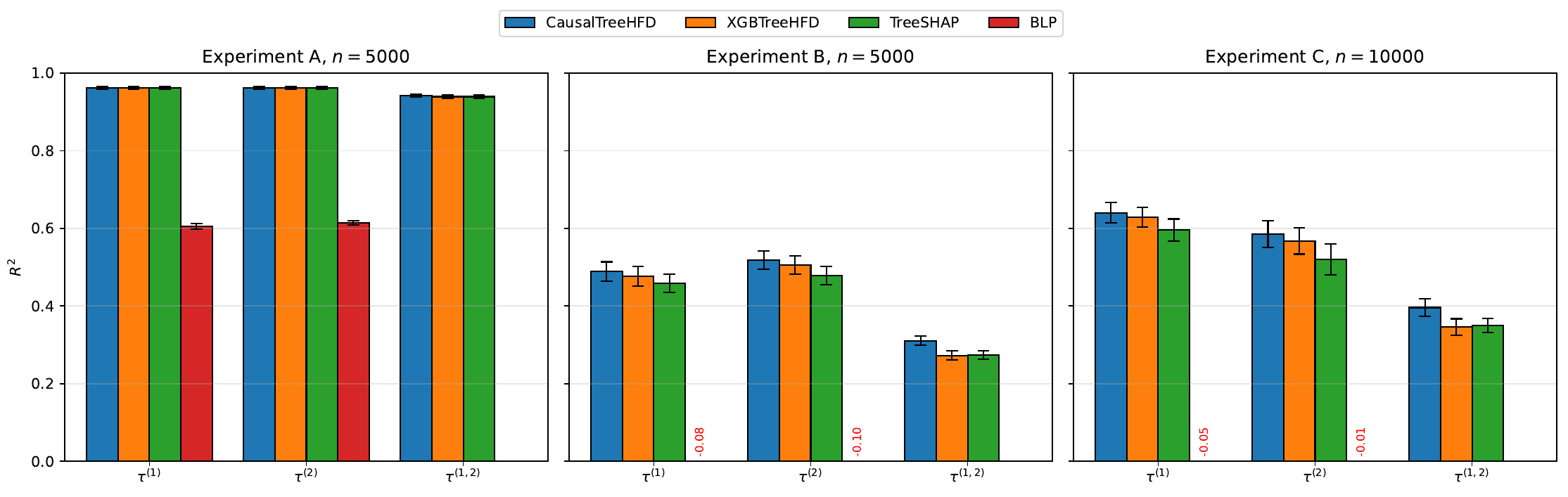}
    \caption{Explained variance and standard errors for each non-null HFD component in Experiments 
    A-C.}
    \label{fig:fullexps}
\end{figure*}

\begin{table}[ht]
\centering
\caption{\(R^2\) with standard errors and orthogonality across estimates, for Experiments A, B, and C.}
\label{tab:metrics}
\small
\setlength{\tabcolsep}{4pt}
\begin{tabular}{@{}ll ccc@{}}
\toprule
\textbf{Metric} & & \textbf{A} & \textbf{B} & \textbf{C} \\
\midrule
$R^2$ vs.\ $\tau_{\mathrm{grf}}$ $\uparrow$ & $\tau_n$ & 0.99\,{\tiny(0.00)} & 0.96\,{\tiny(0.01)} & 0.98\,{\tiny(0.02)} \\
                                 & $\tau_{\mathrm{xgb}}$ & 0.99\,{\tiny(0.00)} & 0.98\,{\tiny(0.00)} & 0.98\,{\tiny(0.02)}\\
                                 & $\tau_{\mathrm{shap}}$ & 0.99\,{\tiny(0.00)} & 0.96\,{\tiny(0.01)} & 0.98\,{\tiny(0.02)}\\
                                 &
                                 $\tau_{BLP}$
                                 & 0.45\,{\tiny(0.00)} &
                                 0.11\,{\tiny(0.02)} & 0.03\,{\tiny(0.00)}\\
\midrule
$R^2$ vs.\ True $\tau$ $\uparrow$ & $\tau_{\mathrm{grf}}$ & 0.91\,{\tiny(0.01)} & 0.40\,{\tiny(0.03)} & 0.61\,{\tiny(0.03)} \\
& $\tau_n$ & 0.91\,{\tiny(0.01)} & 0.41\,{\tiny(0.03)} & 0.62\,{\tiny(0.03)} \\
                                 & $\tau_{\mathrm{xgb}}$ & 0.91\,{\tiny(0.01)} & 0.41\,{\tiny(0.02)} & 0.61\,{\tiny(0.03)}\\
                                 & $\tau_{\mathrm{shap}}$ & 0.91\,{\tiny(0.01)} & 0.40\,{\tiny(0.03)} & 0.61\,{\tiny(0.03)}\\
                                 &
                                 $\tau_{BLP}$
                                 & 0.40\,{\tiny(0.00)} &
                                 -0.05\,{\tiny(0.01)} & -0.01\,{\tiny(0.00)}\\
\midrule
Orthogonality $\downarrow$ & $\tau_n$ & 0.004 & 0.08 & 0.07\\
              & $\tau_{\mathrm{xgb}}$ & 0.003 & 0.08 & 0.07 \\
              & $\tau_{\mathrm{shap}}$ & 0.013 & 0.23 & 0.25 \\
\bottomrule
\end{tabular}
\end{table}

\subsection{Experiment B}
Next, we consider an example in which the covariates driving treatment effect heterogeneity are correlated. Specifically, $\bX \sim \mathcal{N}(0, \Sigma)$, where $\Sigma_{ij}=1$ if $i=j$ and $\Sigma_{ij}=\rho=0.5$ otherwise, for $1 \leq i,j \leq 8$.
The treatment assignment and baseline outcome models are identical to those used in Experiment A, but the CATE is replaced by \(\tau(\bX) = 0.5\cdot\bX^{(1)}\bX^{(2)}\). This choice admits a closed-form HFD under the correlated Gaussian covariate distribution, allowing us to compare the estimated HFD with the ground truth, detailed in Appendix~\ref{app:exps}. The variance of the CATE here is about \(1 \%\) of the total variance of \(Y\).

This setting is harder than experiment A due to the diminished signal and covariate dependence. The three algorithms maintain the same predictive performance as the causal forest, explaining about 40\% of the CATE variance, and strongly outperform the BLP baseline. However, \Cref{fig:fullexps} shows that CausalTreeHFD outperforms the other models when comparing individual HFD components.
Additionally, main effects and interactions are quite strongly correlated for SHAP.

\subsection{Experiment C}

Our last setting repeats Experiment B in higher dimension. The propensity score, baseline outcome and CATE are unchanged, but the covariate dimension is raised to $p = 20$ with the same correlation $\rho = 0.5$, so that each of the $18$ covariates outside $\mathcal{H}$ is correlated with both effect modifiers. We use $n = 10000$ observations and $B = 500$ trees. Since the correlation between $X^{(1)}$ and $X^{(2)}$ is unchanged, the closed-form HFD of Experiment B still applies.

The three methods remain indistinguishable at the level of the CATE, the additive causal forest again matching the accuracy of $\tau_{\mathrm{grf}}$ against the true treatment effect. They separate at the level of the components: \Cref{fig:fullexps} shows CausalTreeHFD ahead on all three non-null components, with the largest margin on the interaction $\tau^{(1,2)}$, where it reaches $R^2 = 0.40$ against $0.35$ for both competitors. Recovering an interaction is precisely the task that degrades when the two effect modifiers compete for splits against $18$ correlated nuisance covariates, and it is where directly decomposing the causal forest is more accurate than distilling it into a surrogate model. The accuracy gap appears for quite large sample size, as expected from the theoretical analysis, where CausalTreeHFD is shown to be consistent, while the limited depth of gradient-boosting surrogate models prevents the error from the distribution discretization to vanish, and no theoretical guarantees exist for TreeSHAP.
Besides, the components of the covariates outside $\mathcal{H}$ meanwhile remain negligible, carrying together $3\%$ of the variance of the non-null components, as predicted by \Cref{cor:convergence}.

\subsection{Real World Data}

We extend our experiments to the analysis of real world data. Although we cannot benchmark model performance on accuracy since we do not know the true treatment effect, we can verify that the outputs of CausalTreeHFD accurately reconstruct the CATE predictions from the causal forest and that the resulting decomposition satisfies the orthogonality constraints, as displayed in \Cref{tab:RWD_metrics}. In addition, we assess the coherence of the resulting interpretations.

\begin{table}[ht]
\centering
\caption{$R^2$ and orthogonality metrics for real data.}
\label{tab:RWD_metrics}
\small
\setlength{\tabcolsep}{8pt}
\begin{tabular}{@{}ll cc@{}}
\toprule
\textbf{Metric} & & \textbf{NHEFS} & \textbf{Welfare} \\
\midrule
$R^2$ vs.\ $\tau_{\mathrm{grf}}$ & $\tau_n$ & 0.88 & 1.00 \\
                                 & $\tau_{\mathrm{xgb}}$ & 0.96 & 1.00 \\
                                 & $\tau_{\mathrm{shap}}$ & 0.87 & 1.00 \\
                                 &
                                 $\tau_{BLP}$
                                 & 0.31 &
                                 0.81 \\
\midrule
Orthogonality & $\tau_n$ & 0.07 & 0.03 \\
              & $\tau_{\mathrm{xgb}}$ & 0.02 & 0.01 \\
              & $\tau_{\mathrm{shap}}$ & 0.13 & 0.59 \\
\bottomrule
\end{tabular}
\end{table}

\paragraph{NHEFS.}
We analyze the \textbf{NHEFS} data on weight gain following smoking cessation \citep{madans1986}, where treatment is smoking cessation between 1971 and 1982 and the outcome is weight change over that period. The metrics in \Cref{tab:RWD_metrics} show that all methods accurately represent the causal forest estimates while satisfying orthogonality constraints, except BLP, which exhibits a poor accuracy. The decomposition of CausalTreeHFD is dominated by the weight in 1971, since its component variance is $51\%$ of the treatment effect variance, followed by the duration of smoking ($12\%$), height ($10\%$), smoking intensity ($8\%$) and age ($7\%$). %
These covariates are also ranked highest by the native variable importance of the \textit{grf} package, as reported in \Cref{tab:nhefs}. However, this native method does not provide a precise quantification of the covariate impacts on the treatment effect, and clearly underestimates the strong influence of the main covariate in this case.

\paragraph{GSS Welfare.}

We also analyzed the Welfare data from a General Social Survey experiment \citep{greenkern2012}. Respondents were randomly assigned to a question asking whether too much money is spent either on welfare (control) or assistance to the poor (treatment), with the outcome indicating whether they answered ``too much". The average treatment effect is $-37$ percentage points ($48\%$ versus $11\%$). %
Political views carry the largest share of the treatment-effect variance in CausalTreeHFD (52\%), followed by party identification (23\%) and years of education (7\%). No other covariate accounts for more than 3\%. The effect is stronger among conservative respondents, whose treatment effect is about 5 percentage points smaller than that of liberal respondents, and Party identification shows similar patterns. %
The HFD and SHAP decompositions perfectly match causal forests predictions. However, main effects and interactions are strongly entangled for SHAP.
The variable importance of \textit{grf} also ranks political views and party identification first, as displayed in Table \ref{tab:gss}, but again underestimates the influence of the main covariate, and inflates the impact of covariates with a small influence.

\section{CONCLUSION}\label{sec:conclusion}

We introduced CausalTreeHFD to interpret causal forests through the Hoeffding functional decomposition of the estimated CATE, and CausalTreeHFD is proved to be consistent.
The principle of solving the moment equation for each tree can be applied to the wider family of generalized random forests. In future work, it will thus be interesting to explore the approach for instrumental and survival forests.

\paragraph{Limitations.}
CausalTreeHFD inherits the limitations of causal forests. In particular, a limited accuracy of the forest will be reflected in the decomposition. Furthermore, reliable detection requires quite large sample sizes, exposing small datasets to spurious discoveries, since uncertainty is not provided. %

\section*{Code Availability}

CausalTreeHFD is implemented as a Python package %
and we provide the code to reproduce the experiments in the Supplementary Material. The package builds
on the \texttt{treehfd} library and calls \texttt{grf}
\citep{athey2018generalizedrandomforests} through \texttt{rpy2}. Reproducing our results requires the version of \texttt{grf} attached in the Supplementary Material, where terminal leaves are pruned to comply with the constraint for the minimum number of observations per leaf. %

\section*{AI Use Statement}

In this work, generative AI tools were used to aid with the writing and proofreading of the manuscript, as well as to develop the code extending the \texttt{grf} package with the \texttt{prune.min.sample} parameter. All methodological decisions, theoretical arguments, experimental designs were developed and verified by the authors.

\bibliographystyle{apalike}
\bibliography{references}

\clearpage
\appendix
\thispagestyle{empty}

\onecolumn
\aistatstitle{Supplementary Materials}

\section{Experiments}\label{app:exp}

\subsection{Metric Definitions}\label{app:metrics}

\paragraph{Explained variance.}
Throughout our simulated experiments, we compare algorithm accuracy using the explained variance \(R^2\), computed on a testing set, and defined as
\[
R^2 = 1 - \frac{\sum_{i=1}^n(\tau(\bX_i) - \tau_n(\bX_i))^2}{\sum_{i=1}^n(\tau(\bX_i) - \bar\tau)^2},
\]
where \(\bar\tau = \frac{1}{n} \sum_{i=1}^n \tau(\bX_i)\). In the simulations, the individual treatment effects \(\tau(\bX_i)\) are known from the data-generating process and therefore serve as a ground-truth target for evaluating the estimated CATEs. In real causal inference applications, however, individual treatment effects are unobserved, and this oracle \(R^2\) cannot be directly computed. In particular, a negative \(R^2\) in our simulations indicates that the estimated CATEs have a larger squared error relative to the ground-truth treatment effects than the corresponding sample-mean benchmark, but this comparison is not directly available in practice.

\paragraph{Hierarchical Orthogonality.}

We assess the hierarchical orthogonality conditions by evaluating the correlation coefficients \(\mathrm{Corr}(\tau^{(j)}, \tau^{(j, k)})\) for each pair of variables \((j, k)\). We compute this coefficient only for interaction components with variance at least 1\% of the output variance \(\operatorname{Var}(\tau)\) (otherwise correlation coefficients may be undefined), and report the maximum absolute value of the correlation coefficients.

\subsection{Computational Complexity of CausalTreeHFD} \label{app:complexity}

The computational complexity of learning the initial causal forest is quasilinear with respect to the sample size $n$, linear with the number of trees $B$, and sublinear with the dimension $p$: $O(B \sqrt{p} n \log^2(n))$.

Deriving the additive causal forest from the initial forest only requires to drop all observations from the estimation subsample down the trees to compute tree outputs, which has a computational complexity of $O(B n \log(n))$.

Then, CausalTreeHFD has a linear complexity with $n$ and $B$, and quadratic with $p$: $O(B n p^2)$.
However, the complexity can be linearized with $p$ by limiting the number of interaction terms extracted in each tree by TreeHFD, using the parameter \texttt{depth\_variable}.
Indeed, the most costly step in TreeHFD is solving the sparse quadratic problem for each tree, using \texttt{lsqr} function from \texttt{scipy}, which has a linear complexity with respect to the number of non-null elements in the sparse optimization matrix. For each subset of variables, the number of non-empty cells in the Cartesian tree partition is $O(n)$, and the number of main effects and second-order interactions are respectively $O(p)$ and $O(p^2)$ for fully grown trees with large $n$. The parameter \texttt{depth\_variable} in TreeHFD limits the maximum of number of interactions extracted from each tree, and makes it independent of $n$ and $p$, which leads to a linear complexity of $O(B n p)$.
Notice that building the sparse optimization matrix has the same complexity as solving the least squares problem.

\subsection{CausalTreeHFD Training Settings}\label{app:training}

\paragraph{Causal Forest Settings.}
In all simulated experiments, causal forests are trained using $B = 500$ trees and $\mathtt{min.node.size} = 5$. In addition, we implemented a leaf-pruning procedure that removes leaves containing fewer than a pre-defined number of observations used to construct leaf predictions. This parameter is called $\mathtt{prune.min.sample}$ and we fixed its default value to $20$ in our simulated experiments. It is crucial to make sure each leaf contains at least a few samples as a centering step is done at each leaf. This additional pruning step is necessary because the honesty procedure of \texttt{grf} used to construct the trees means that $\mathtt{min.node.size}$ does not guarantee a minimum number of observations in the samples used to construct each leaf prediction.

\paragraph{XGBoost Settings.} XGBoost hyperparameters were selected empirically based on preliminary experiments. The chosen hyperparameters are reported in \Cref{tab:xgboost_hyperparameters}. All other hyperparameters were left at their default XGBoost values.

\paragraph{TreeSHAP.}
TreeSHAP is also applied to the additive causal forest, as CausalTreeHFD. The additive causal forest \(\tau_n\) is re-expressed as a scikit-learn random forest whose trees carry the grf split structure and whose leaves hold the per-tree values \(\tau_{n, b}\) so that its prediction is exactly \(\tau_n(\bx) = \frac{1}{B}\sum_b \tau_{n, b}(\bx)\). The TreeSHAP explainer and our estimator therefore target an identical object and any difference between them comes from the attribution rule alone. We use \texttt{shap.TreeExplainer} with \texttt{feature\_perturbation = "tree\_path\_dependent"}, which weights subsets by the tree path coverages and thus requires no background sample.

\begin{table}%
\begin{minipage}[t]{0.58\textwidth}
\vspace{0pt}
\captionsetup{labelsep=space,singlelinecheck=false,labelfont=bf}
\hrule height.8pt depth0pt \kern2pt
\caption{Surrogate HFD (\(\tau_{\mathrm{xgb}}\))}\label{algo:xgb}
\kern2pt\hrule\kern2pt
\begin{algorithmic} [1]
\Require Covariates matrix \(\bX\), treatment assignment \(W\) and outcome \(Y\).
\State $\tau_{\mathrm{grf}} \leftarrow \mathtt{CausalForest}(\bX, W, Y)$
\State $g_n \leftarrow \mathtt{XGBoost}(\bX, \tau_{\mathrm{grf}}(\bX))$
\State $(\tau_{\mathrm{xgb}}^{(J)})_{J \in \mathcal{S}} \leftarrow \mathtt{TreeHFD}(g_n, \bX)$
\State \Return $(\tau_{\mathrm{xgb}}^{(J)})_{J \in \mathcal{S}}$
\end{algorithmic}
\kern2pt\hrule
\end{minipage}
\hfill
\begin{minipage}[t]{0.40\textwidth}
\vspace{0pt}
\centering
\captionof{table}{XGBoost hyperparameters selected.}
\label{tab:xgboost_hyperparameters}
\begin{tabular}{lc}
\toprule
\textbf{Hyperparameter} & \textbf{Selected value} \\
\midrule
Number of estimators & 500 \\
Maximum tree depth & 4 \\
Learning rate & 0.05 \\
Subsample & 0.8 \\
Column subsample & 0.8 \\
L2 regularization ($\lambda$) & 1 \\
\bottomrule
\end{tabular}
\end{minipage}
\end{table}

\paragraph{Computing Infrastructure and Software Licenses.}
All experiments were run on a single 13-inch MacBook Air (2025) with an Apple M4 chip (10 cores: 4 performance and 6 efficiency) and 16\,GB of unified memory, under macOS 26.6. No GPU was used. The experiments rely on \textit{grf} 2.6.1 \citep{athey2018generalizedrandomforests}, called from Python through \texttt{rpy2} 3.6.7, as well as \texttt{treehfd} 1.4.1 \citep{benard2025treehfd}, \texttt{XGBoost} 3.4.1 \citep{chen2016xgboost}, \texttt{shap} 0.52.0 \citep{lundberg2020local}, \texttt{scikit-learn} 1.9.0, \texttt{NumPy} 2.5.2 and \texttt{pandas} 3.0.5. These packages are distributed under the Apache 2.0 (\texttt{treehfd}, \texttt{XGBoost}), GPL-3 (\texttt{grf}), GPL-2.0-or-later (\texttt{rpy2}), MIT (\texttt{shap}) and BSD 3-Clause (\texttt{scikit-learn}, \texttt{NumPy}, \texttt{pandas}) licenses. We use all packages according to the terms of their license.

\subsection{Simulated Experiments}\label{app:exps}

We list in \Cref{tab:hfd_A} and \Cref{tab:hfd_B} the non-null HFD components of the CATE under Experiments A and B respectively.

\begin{table}%
\centering
\small
\setlength{\tabcolsep}{4pt}
\newcommand{\hfdstrut}{\vphantom{\frac{\rho}{1 + \rho^2}}}
\caption{Non-null HFD components of the CATE in Experiments A and B.}
\label{tab:hfd_AB}
\begin{subtable}[t]{0.45\textwidth}
\centering
\caption{Experiment A.}
\label{tab:hfd_A}
\begin{tabular}{c | c}
\toprule
$J$ & $\tau^{(J)}(\bX^{(J)})$ \\
\midrule
$\emptyset\hfdstrut$ & $0.25$  \\
$\{1\}\hfdstrut$ & $ 0.5 \cdot (\mathbf{1}\{X^{(1)} \geq 0\} - 0.5)$ \\
$\{2\}\hfdstrut$ & $ 0.5 \cdot (\mathbf{1}\{X^{(2)} \geq 0\} - 0.5)$  \\
$\{1, 2\}\hfdstrut$ & $(\mathbf{1}\{X^{(1)} \geq 0\} - 0.5) \cdot (\mathbf{1}\{X^{(2)} \geq 0\} - 0.5)$  \\
\bottomrule
\end{tabular}
\end{subtable}
\hfill
\begin{subtable}[t]{0.53\textwidth}
\centering
\caption{Experiment B.}
\label{tab:hfd_B}
\begin{tabular}{c | c}
\toprule
$J$ & $\tau^{(J)}(\bX^{(J)})$ \\
\midrule
$\emptyset\hfdstrut$ & $0.5 \cdot \rho$  \\
$\{1\}\hfdstrut$ & $ 0.5 \cdot \frac{\rho}{1 + \rho^2}({X^{(1)}}^2 - 1)$ \\
$\{2\}\hfdstrut$ & $0.5 \cdot \frac{\rho}{1 + \rho^2}({X^{(2)}}^2 - 1)$  \\
$\{1, 2\}\hfdstrut$ & $0.5 \cdot (\rho \frac{1 - \rho^2}{1 + \rho^2} - \frac{\rho}{1 + \rho^2}({X^{(1)}}^2 + {X^{(2)}}^2) + {X^{(1)}}{X^{(2)}})$  \\
\bottomrule
\end{tabular}
\end{subtable}
\end{table}

Notice that for Experiment B, if the covariates were independent ($\rho = 0$), the decomposition would simplify to a single non-null interaction component $\tau^{(\{1,2\})}(X^{(1)}, X^{(2)}) = 0.5 \cdot X^{(1)} X^{(2)}$.

Additionally, we provide results in \Cref{fig:expAB} of our experiments for different sample sizes (\(n\) = 500, 1000, 2000 and 5000) to show that the decomposition accuracy increases as we add more data.

\begin{figure}%
    \centering
    \begin{subfigure}[b]{0.49\linewidth}
        \centering
        \includegraphics[width=\linewidth]{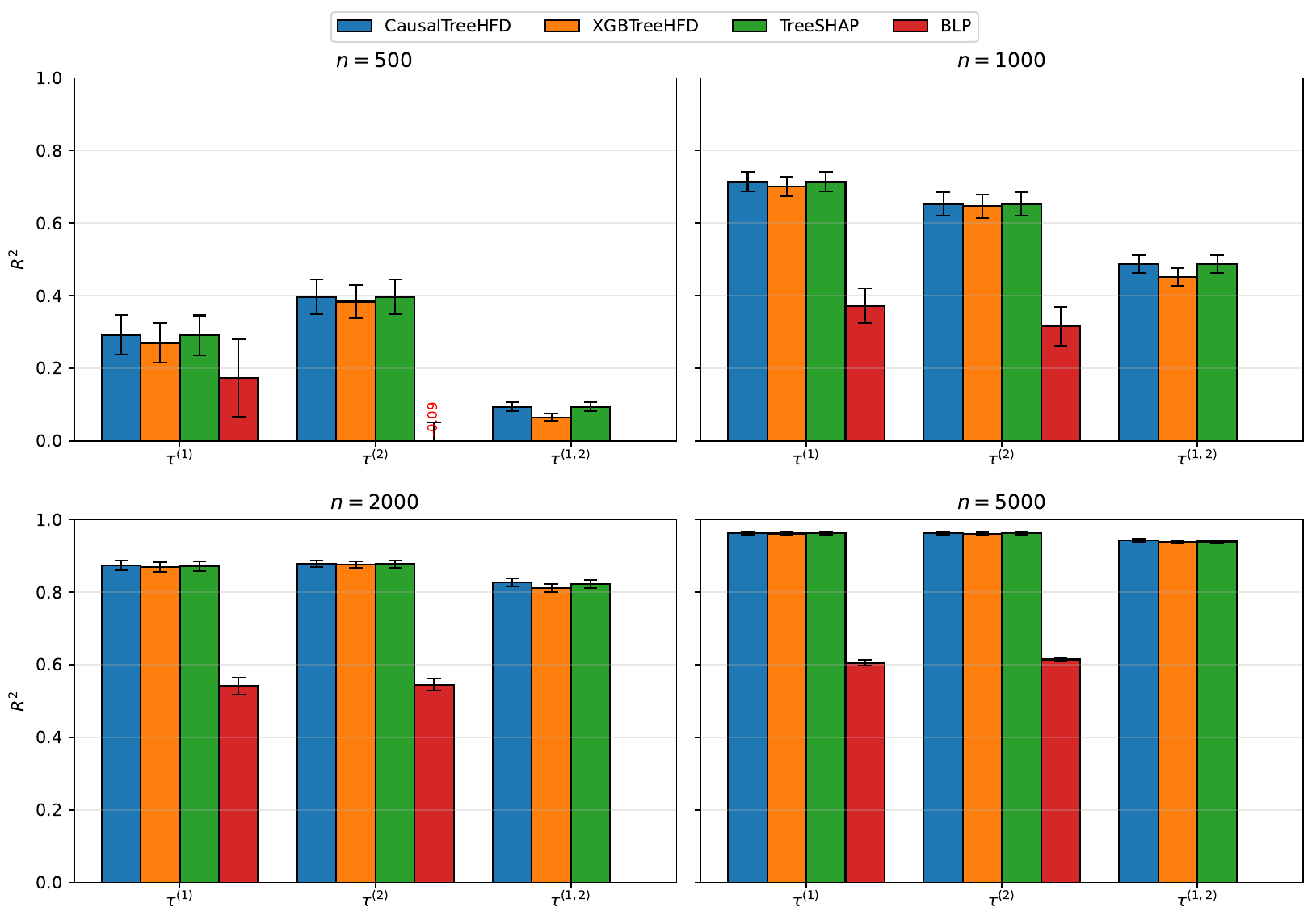}
        \caption{Experiment A.}
        \label{fig:expA}
    \end{subfigure}
    \hfill
    \begin{subfigure}[b]{0.49\linewidth}
        \centering
        \includegraphics[width=\linewidth]{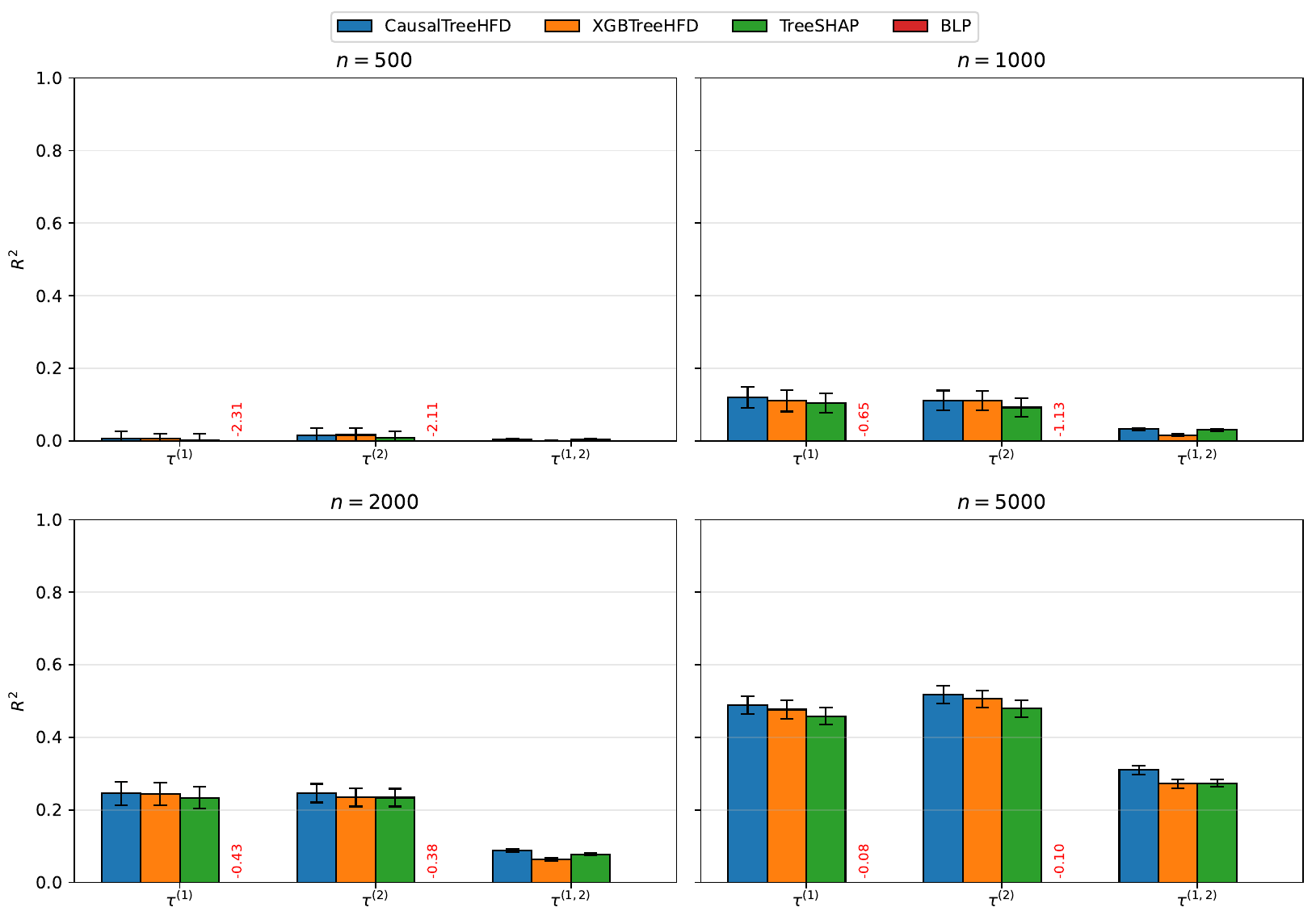}
        \caption{Experiment B.}
        \label{fig:expB}
    \end{subfigure}
    \caption{Explained variance and standard errors for each non-null HFD component across sample sizes in Experiments A and B.}
    \label{fig:expAB}
\end{figure}

\Cref{fig:plots_expA} and \Cref{fig:plots_expB} plot the components of the decomposition for one run of Experiments A and B alongside the true decomposition, and show that the three methods correctly estimate the expected decomposition. Notice that interaction terms are more noisy for SHAP. In \Cref{fig:plots_expB}, the inputs have a Gaussian distribution, and consequently, there are very few data points with coordinates smaller than $-2$ and above $2$, and trees are constant in these areas by construction.

\begin{figure}%
    \centering
    \begin{subfigure}[b]{0.8\linewidth}
        \centering
        \includegraphics[width=\linewidth]{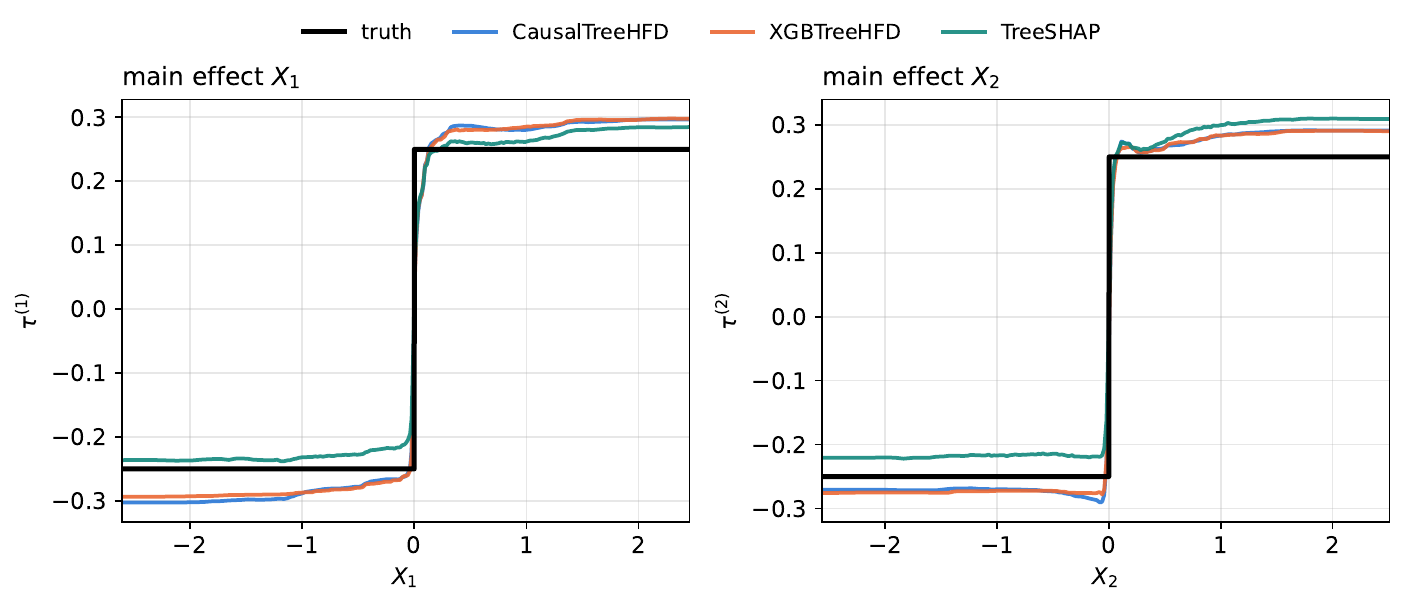}
        \caption{Main effects.}
    \end{subfigure}

    \vspace{0.5em}

    \begin{subfigure}[b]{0.8\linewidth}
        \centering
        \includegraphics[width=\linewidth]{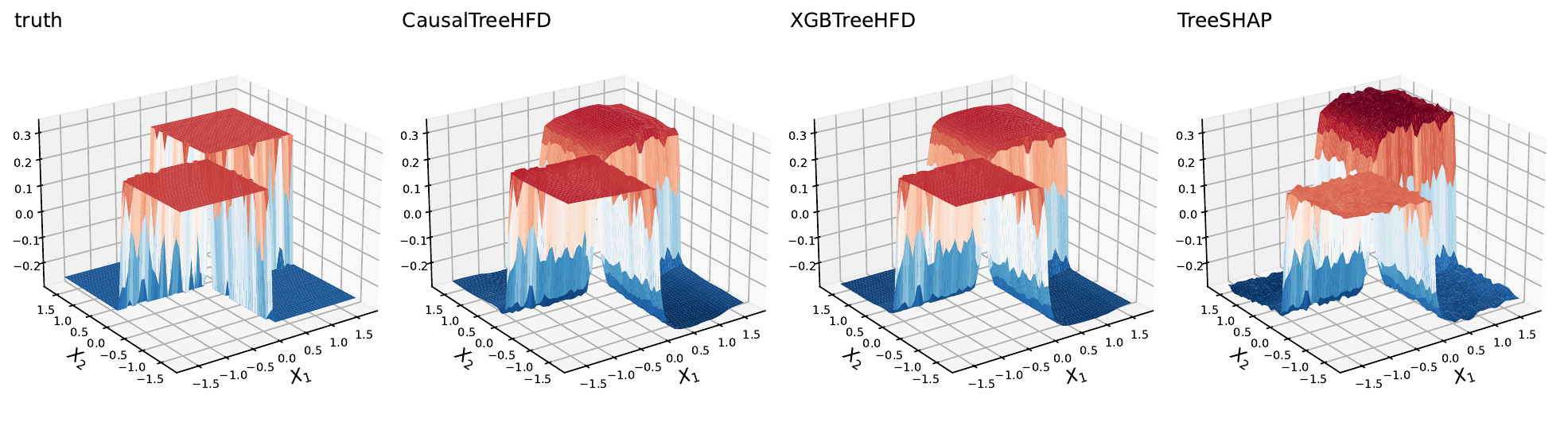}
        \caption{Interaction effect.}
    \end{subfigure}

    \caption{Main effects and interaction effects of variables \(\bX^{(1)}\) and \(\bX^{(2)}\) for Experiment A.}
    \label{fig:plots_expA}
\end{figure}

\begin{figure}%
    \centering
    \begin{subfigure}[b]{0.8\linewidth}
        \centering
        \includegraphics[width=\linewidth]{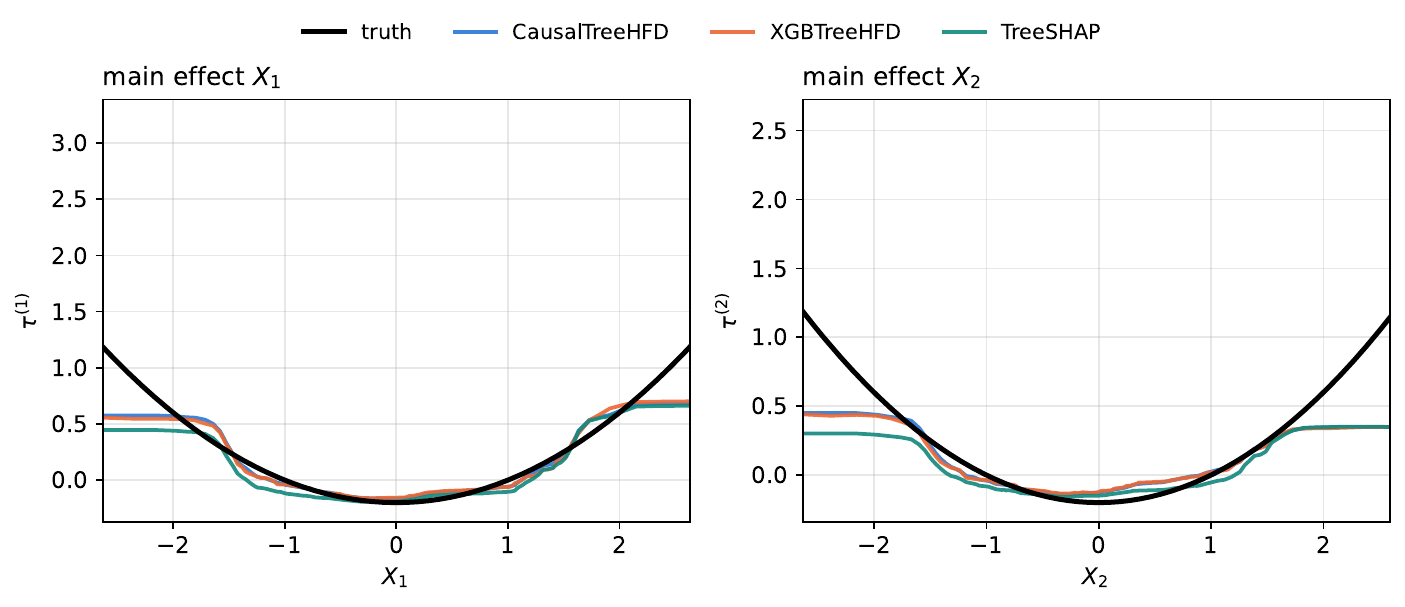}
        \caption{Main effects.}
    \end{subfigure}

    \vspace{0.5em}

    \begin{subfigure}[b]{0.8\linewidth}
        \centering
        \includegraphics[width=\linewidth]{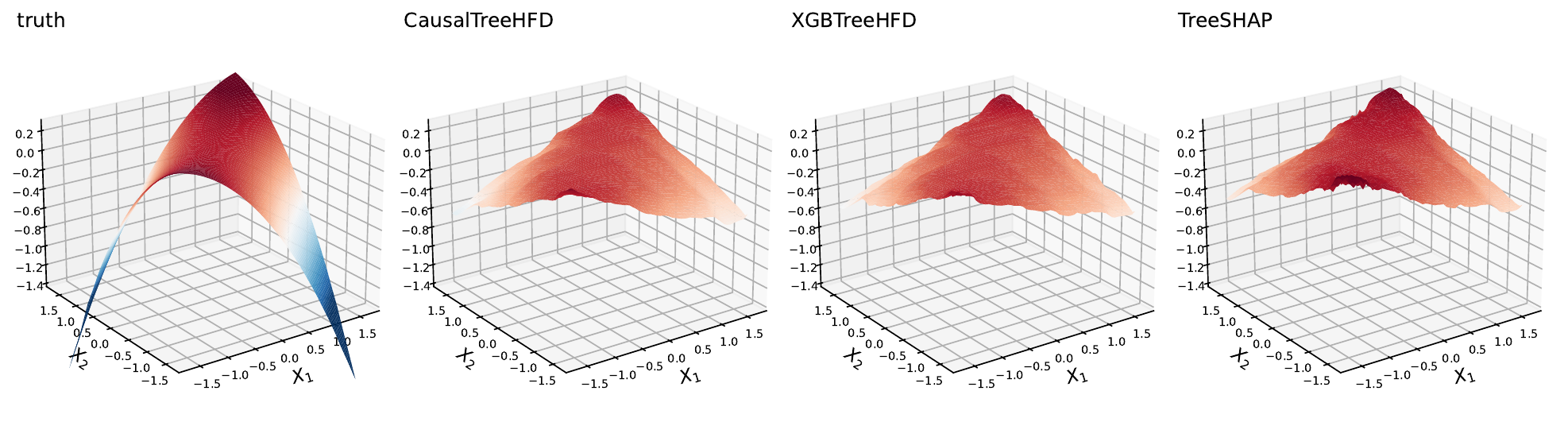}
        \caption{Interaction effect.}
    \end{subfigure}

    \caption{Main effects and interaction effects of variables \(\bX^{(1)}\) and \(\bX^{(2)}\) for Experiment B.}
    \label{fig:plots_expB}
\end{figure}

\subsection{Real Data Experiments}

\paragraph{NHEFS.}

The NHEFS dataset tracks the weight of individuals measured once in 1971 and again in 1982 \citep{madans1986}. As is done in Chapter 12 of \cite{hernanrobins2020}, treatment is smoking cessation between the two visits and the outcome is the weight change in kilograms. Of the 1629 individuals in the dataset, 63 have no recorded weight in 1982, which leaves \(n = 1566\) complete cases, of whom 25.7\% quit smoking. The average treatment effect as estimated by a causal forest is 3.30 kg.

For our analysis, we keep every baseline covariate recorded for all participants, leaving \(p=41\) covariates. We train a causal forest using \(B = 2000\), \(\mathtt{min.node.size} = 5\), and \(\mathtt{prune.min.sample} = 16\) (approximately 1\% of the number of samples). We run CausalTreeHFD on the resulting forest and analyze its components. \Cref{tab:nhefs} reports, for each covariate, the variance of its first-order component divided by the sum of the variances of all the components of the decomposition. This value can be understood as a variable importance metric, which we compare to the split-based variable importance implemented in the \textit{grf} package. \Cref{fig:plots_nhefs} shows the main components of the estimated decomposition.

\begin{figure*}%
    \centering
    \includegraphics[width=\linewidth]{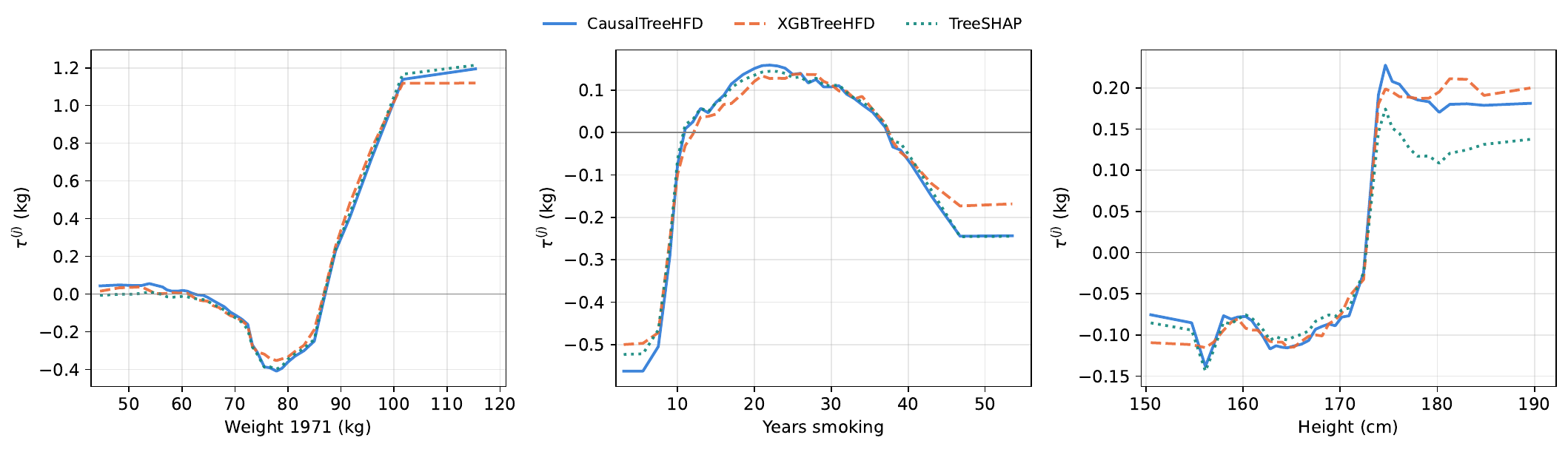}
    \caption{Main effects of the three most important variables in the NHEFS dataset.}
    \label{fig:plots_nhefs}
\end{figure*}

\paragraph{GSS Welfare.}

The Welfare dataset comes from a General Social Survey experiment \citep{greenkern2012}. Respondents were asked whether too much money is spent on welfare (control) or on assistance to the poor (treatment). The wording was randomly assigned, and the outcome is whether the respondent answered ``too much''. We keep $p = 31$ covariates and drop every respondent with a missing value, which leaves \(n = 13198\) of the 36501 respondents, of whom 51.4\% received the treatment wording. The average treatment effect as estimated by a causal forest is $-36.6$ percentage points.

We train a causal forest using \(B = 2000\), \(\mathtt{min.node.size} = 5\), and \(\mathtt{prune.min.sample} = 130\) (approximately 1\% of the number of samples). We run CausalTreeHFD on the resulting forest and compute the same variable importance metric as for NHEFS. \Cref{tab:gss} compares it to the split-based variable importance implemented in the \textit{grf} package. \Cref{fig:plots_gss} shows the main components of the 3 most important variables of the estimated decomposition.

Hierarchical orthogonality requires an interaction to carry no information already held by the first-order terms it contains, that is \(\E[\tau^{(\mathrm{partyid},\,\mathrm{polviews})} \mid \bX^{(j)}] = 0\) for \(j \in \{\mathrm{partyid}, \mathrm{polviews}\}\). \Cref{fig:orth_gss} plots these two conditional means against the covariate being conditioned on. Both HFD estimators stay flat and close to zero, whereas the TreeSHAP curve increases broadly across the political spectrum. It means that SHAP interaction component captures a signal that should have been included in first-order components. This is quite expected, since SHAP is not designed to enforce hierarchical orthogonality. However, this property improves interpretability of decompositions, because it ensures that interactions and main effects are separated, and only includes an interaction if required, which favors high sparsity.

\begin{figure*}%
    \centering
    \includegraphics[width=\linewidth]{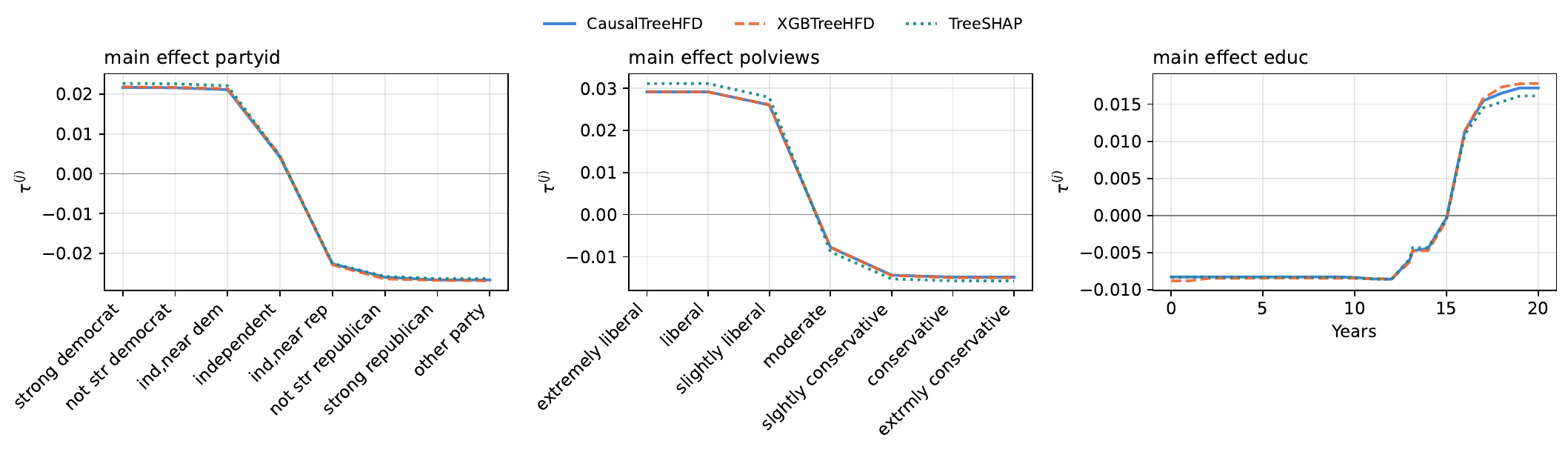}
    \caption{Main effects of the three most important variables in the GSS Welfare dataset.}
    \label{fig:plots_gss}
\end{figure*}

\begin{table}%
\centering
\setlength{\tabcolsep}{4pt}
\begin{minipage}[t]{0.48\linewidth}
\centering
\caption{NHEFS: share of the total component variance carried by each covariate under CausalTreeHFD next to the split-based importance of \textit{grf}.}
\label{tab:nhefs}
\begin{tabular}{l cc}
\toprule
\textbf{Covariate} & \textbf{CausalTreeHFD} & \textbf{\textit{grf}} \\
\midrule
Weight 1971       & 0.51 & 0.26 \\
Years smoking     & 0.12 & 0.13 \\
Height            & 0.10 & 0.10 \\
Smoking intensity & 0.08 & 0.07 \\
Age               & 0.07 & 0.10 \\
\bottomrule
\end{tabular}
\end{minipage}\hfill
\begin{minipage}[t]{0.48\linewidth}
\centering
\caption{GSS Welfare: share of the total component variance carried by each covariate under CausalTreeHFD next to the split-based importance of \textit{grf}.}
\label{tab:gss}
\begin{tabular}{l cc}
\toprule
\textbf{Covariate} & \textbf{CausalTreeHFD} & \textbf{\textit{grf}} \\
\midrule
Political views        & 0.52 & 0.30 \\
Party identification   & 0.23 & 0.18 \\
Years of education     & 0.07 & 0.08 \\
Hours worked last week & 0.03 & 0.07 \\
Industry               & 0.03 & 0.08 \\
\bottomrule
\end{tabular}
\end{minipage}
\end{table}

\begin{figure*}%
    \centering
    \includegraphics[width=\linewidth]{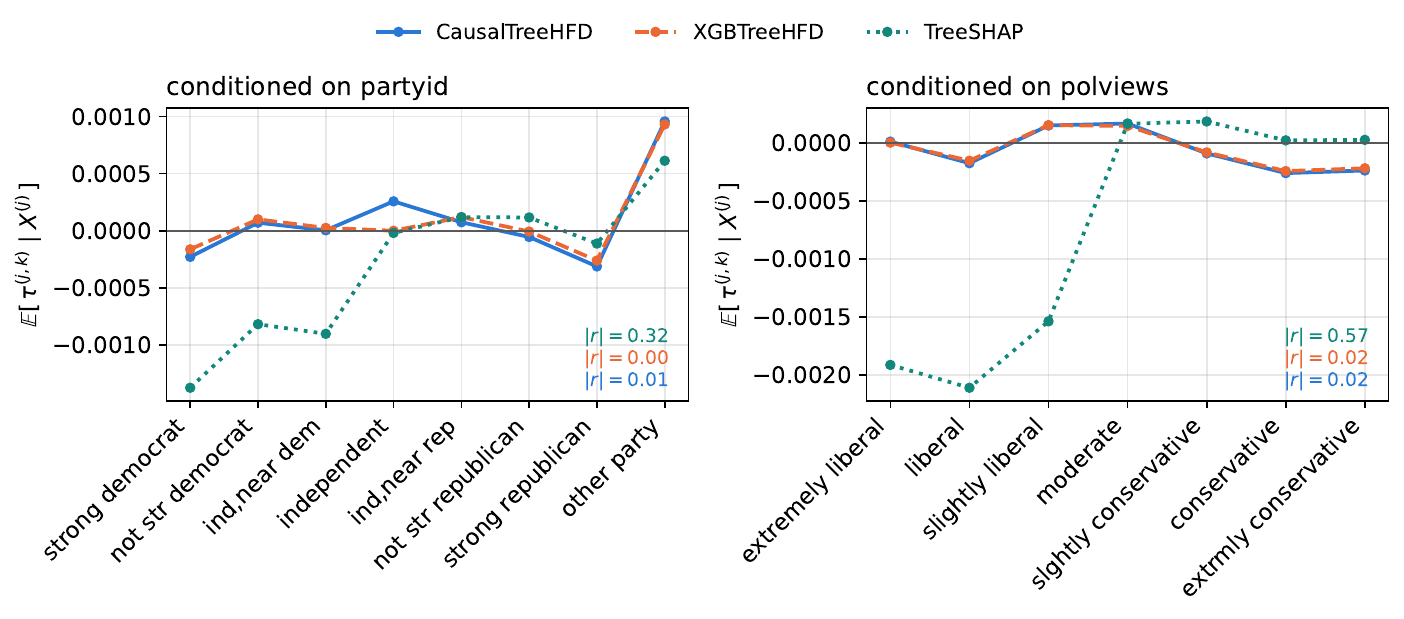}
    \caption{Hierarchical orthogonality on GSS Welfare for the \texttt{partyid} \(\times\) \texttt{polviews} interaction. Each panel shows \(\E[\tau^{(\mathrm{partyid},\,\mathrm{polviews})} \mid \bX^{(j)}]\) for \(j \in \{\mathrm{partyid}, \mathrm{polviews}\}\), which a valid decomposition sets to zero.}
    \label{fig:orth_gss}
\end{figure*}

\section{Proofs of theorems}\label{app:proofs}

\subsection{Proof of Theorem \ref{th:one}}

\begin{proof}[Proof of Theorem \ref{th:one}]

The proof relies on deconstructing the output of the estimate as a continuous function of several intermediate estimates, which we will show converge to their oracle counterparts.

Notice that Theorem \ref{th:one} considers the causal forest estimate  of Equation (\ref{eq:taunb}) without the local centering of $W$ and $Y$ with regression forests, following the theoretical analysis of \citet{athey2018generalizedrandomforests}, which also mentions in Section 6.1.1. that consistency still holds with any local centering step performed with independent data.

We first establish the consistency of a single tree estimate $\tau_{n,b}(\bx)$. Since
\[
\tau_n(\bx) = \frac{1}{B} \sum_{b=1}^B \tau_{n,b}(\bx),
\]
the consistency of the ensemble follows immediately from the consistency of each tree, since $B$ is finite. As written in Equation (\ref{eq:taunb}), each tree estimate is defined by
\begin{align*}
    \tau_{n,b}(\bx) &= \frac{\sum\limits_{i=1}^n \alpha_{ib}(\bx) (W_i - \sum\limits_{i=1}^n \alpha_{ib}(\bx) W_i)(Y_i - \sum\limits_{i=1}^n \alpha_{ib}(\bx) Y_i)}{\sum\limits_{i=1}^n \alpha_{ib}(\bx) (W_i - \sum\limits_{i=1}^n \alpha_{ib}(\bx) W_i)^2} \\
    &= \frac{\sum\limits_{i=1}^n \alpha_{ib}(\bx) W_i Y_i - (\sum\limits_{i=1}^n \alpha_{ib}(\bx) W_i) (\sum\limits_{i=1}^n \alpha_{ib}(\bx) Y_i)}{\sum\limits_{i=1}^n \alpha_{ib}(\bx) W_i^2 - (\sum\limits_{i=1}^n \alpha_{ib}(\bx) W_i)^2}.
\end{align*}

The estimate is thus a continuous function of several leaf-wise averages. It is therefore sufficient to show that each of these averages converges in probability to its oracle counterpart.
Hence, we begin with
\[
\mu_{n, b}(\bx) = \sum_{i=1}^n \alpha_{ib}(\bx) Y_i,
\]
and show that
\[
\mu_{n, b}(\bx) \xrightarrow{p} \mu(\bx), \qquad \mu(\bx) = \mathbb{E}[Y \mid \bX=\bx].
\]
The same arguments apply to the remaining moments appearing in the numerator and denominator of $\tau_{n, b}(\bx)$.

\vspace{1em}

Recall that $n_b(\bx)$ denotes the number of observations of the estimation subsample contained in the leaf $L_b(\bx)$, and let $\mathcal{T}_b = (\mathcal{I}_b, \Theta_b)$ be the tree structure, where \(\mathcal{I}_b\) is the subsample used to determine the recursive partition and \(\Theta_b\) collects any additional randomization introduced during tree construction (e.g. random feature selection). By honesty of the tree construction, the weight $\alpha_{ib}(\bx)$ is independent of the outcome $Y_i$ used for estimation of the cell outputs, conditional on the input $\bX_i$.

Notice that Specification \ref{spec:one} enforces that the minimum number of points in each terminal leaf is at least $k_n$. In practice, trees are grown until terminal leaves cannot be further split without violating this constraint. The cell size is measured using the subsample $\mathcal{I}_b$ used for the partition construction. However, because of the honesty property, cell outputs are computed using an independent subsample $\mathcal{I}'_b$ of the same size as $\mathcal{I}_b$, and only observations from $\mathcal{I}'_b$ can belong to $L_b(\bx)$. Consequently, some terminal leaves may contain a smaller number of points than $k_n$, and some splits at the bottom of the tree must be pruned to comply with Specification \ref{spec:one}.
This pruning mechanism implies that the subsample $\mathcal{I}'_b$ is also involved to build the tree partition. However, only the inputs $\bX_i$ are used to measure cell sizes, but not the outputs $Y_i, W_i$. Therefore, the honesty property is still valid.

\paragraph{Bias.}
We first condition on the tree structure to write
\begin{align*}
    \left| \mathbb{E}[\mu_{n, b}(\bx)] - \mu(\bx) \right| \vphantom{\sum_{i}} &= \left| \mathbb{E}[\mathbb{E}[\mu_{n, b}(\bx) \mid \mathcal{T}_b, \bX_1, \hdots, \bX_n] - \mu(\bx)] \right| \vphantom{\sum_{i}} \\
    &= \left| \mathbb{E}\left[\frac{1}{n_b(\bx)} \sum_{i=1}^n \left(\mathbb{E}[Y_i \mid \bX_i] - \mu(\bx)\right) \mathds{1}_{\bX_i \in L_b(\bx)} \right]  \right| \\
    &\leq \mathbb{E}\left[\frac{1}{n_b(\bx)} \sum_{i=1}^n \left| \mathbb{E}[Y_i \mid \bX_i] - \mu(\bx) \right| \mathds{1}_{\bX_i \in L_b(\bx)} \right].
\end{align*}

Since every observation in the leaf lies at distance at most \(\text{diam}(L_b(\bx))\) from \(\bx\), and using the Lipschitz continuity of \(\mu\) with the constant denoted by $C_{\mu}$,
\[
\left| \mathbb{E}[Y_i \mid \bX_i] - \mu(\bx) \right| \mathds{1}_{\bX_i \in L_b(\bx)} \leq C_{\mu}\, \text{diam}(L_b(\bx)).
\]
Therefore,
\[
\left| \mathbb{E}[\mu_{n, b}(\bx)] - \mu(\bx) \right| \leq C_{\mu}\, \mathbb{E}[\text{diam}(L_b(\bx))].
\]
Then, we apply Lemma \ref{lemma:diam} to get that, in probability,
\[
\mathrm{diam}(L_b(\bx)) \rightarrow 0,
\]
and since cell diameters are bounded, $\E[\mathrm{diam}(L_b(\bx))] \rightarrow 0$, and we obtain
\[
\left| \mathbb{E}[\mu_{n, b}(\bx)] - \mu(\bx) \right| \rightarrow 0.
\]

\paragraph{Variance.}

Using the law of total variance,
\begin{align*}
     \operatorname{Var}(\mu_{n, b}(\bx)) =  &\operatorname{Var}(\mathbb{E}[\mu_{n, b}(\bx) \mid \mathcal{T}_b, \bX_1, \hdots, \bX_n]) + \mathbb{E}[ \operatorname{Var}(\mu_{n, b}(\bx) \mid \mathcal{T}_b, \bX_1, \hdots, \bX_n)].
\end{align*}

We study each term separately.

\paragraph{First term.}

Because
\begin{align*}
\mathbb{E}[\mu_{n, b}(\bx) \mid \mathcal{T}_b,& \bX_1, \hdots, \bX_n] = \frac{1}{n_b(\bx)} \sum_{i=1}^n \mathbb{E}[Y_i \mid \bX_i] \mathds{1}_{\bX_i \in L_b(\bx)},
\end{align*}
we obtain
\begin{align*}
    \operatorname{Var}(\E[\mu_{n, b}(\bx) \mid \mathcal{T}_b, \bX_1, \hdots, \bX_n]) 
    &= \min_c \mathbb{E}\left[(\E[\mu_{n, b}(\bx) \mid \mathcal{T}_b, \bX_1, \hdots, \bX_n] - c)^2\right] \\ 
    &\leq \mathbb{E}\left[(\E[\mu_{n, b}(\bx) \mid \mathcal{T}_b, \bX_1, \hdots, \bX_n] - \mu(\bx))^2\right] \\
    &\leq \mathbb{E}\Big[ \Big(\frac{1}{n_b(\bx)} \sum_{i=1}^n (\mathbb{E}[Y_i \mid \bX_i]  - \mu(\bx)) \mathds{1}_{\bX_i \in L_b(\bx)} \Big)^2\Big] \\
    &\leq \mathbb{E}\Big[ \Big(\frac{C_{\mu}}{n_b(\bx)} \sum_{i=1}^n \|\bX_i - \bx \|_2 \mathds{1}_{\bX_i \in L_b(\bx)} \Big)^2\Big] \\
    &\leq C_{\mu}^2\, \mathbb{E}[\text{diam}(L_b(\bx))^2].
\end{align*}
Since the leaf diameters converge to zero,
\[
 \operatorname{Var}(\mathbb{E}[\mu_{n, b}(\bx) \mid \mathcal{T}_b, \bX_1, \hdots, \bX_n]) \rightarrow 0.
\]

\paragraph{Second term.}

Since
\begin{align*}
\mu_{n, b}(\bx) = \frac{1}{n_b(\bx)} \sum_{i=1}^n Y_i \mathds{1}_{\bX_i \in L_b(\bx)},
\end{align*}
and the outputs $Y_i$ are independent conditional on $\bX_1, \hdots, \bX_n$ and $\mathcal{T}_b$.

From Assumption \ref{asm:two}, conditional second moments are uniformly bounded, that is
\[
\mathbb{E}[Y^2 \mid \bX=\bx] \le C.
\]
Then, for all \(\bx \in \mathcal{X}\), we have
\begin{align*}
    \operatorname{Var}(\mu_{n, b}(\bx)&\mid \mathcal T_b, \bX_1, \hdots, \bX_n) \\
    &= \frac{1}{n_b(\bx)^2}
       \sum_{i=1}^n
       \operatorname{Var}(Y_i \mid \bX_i) \mathds{1}_{\bX_i \in L_b(\bx)} \\
    &\leq \frac{1}{n_b(\bx)^2}
       \sum_{i=1}^n
       \E[Y_i^2 \mid \bX_i] \mathds{1}_{\bX_i \in L_b(\bx)} \\
    &\leq \frac{C}{n_b(\bx)}
     \leq \frac{C}{k_n}.
\end{align*}
Taking expectation yields
\[
\mathbb{E}[ \operatorname{Var}(\mu_{n, b}(\bx) \mid \mathcal{T}_b, \bX_1, \hdots, \bX_n)] \leq \frac{C}{k_n},
\]
where \(k_n\) denotes the minimum leaf size. Since \(k_n \rightarrow \infty\),
\[
\mathbb{E}[ \operatorname{Var}(\mu_{n, b}(\bx) \mid \mathcal{T}_b, \bX_1, \hdots, \bX_n)] \rightarrow 0.
\]
Combining both terms,
\[
 \operatorname{Var}(\mu_{n, b}(\bx)) \rightarrow 0.
\]
Since both the bias and variance vanish, we conclude that
\[
\mu_{n, b}(\bx) \xrightarrow{p} \mu(\bx).
\]

The same argument applies to the estimates of \(\mathbb{E}[WY \mid \bX=\bx]\), \(\mathbb{E}[W \mid \bX=\bx]\) and \(\mathbb{E}[W^2 \mid \bX=\bx]\). By the Continuous Mapping Theorem, it follows that
\[
\tau_{n, b}(\bx) \xrightarrow{p} \tau(\bx),
\]
where the denominator is not null from the positivity property of Assumption \ref{asm:causal}. 
Consistency of the forest estimate \(\tau_n(\bx)\) then follows immediately from averaging over all trees \(b \in \{1, ..., B\} \), since $B$ is finite.

\end{proof}

\begin{lemma} \label{lemma:diam}
    If Assumptions~\ref{asm:causal}--\ref{asm:three} and \Cref{spec:one} are satisfied, then for $\bx \in \mathcal{X}$ and any tree $b \in \{1, \hdots, B\}$, we have in probability
    \begin{align*}
        \mathrm{diam}(L_b(\bx)) \rightarrow 0.
    \end{align*}
\end{lemma}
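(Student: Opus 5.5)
We prove Lemma \ref{lemma:diam} by adapting the argument of \citet{wager2017estimationinferenceheterogeneoustreatment} (Lemma 2) for random-split, regular trees. The plan is to show that, along every coordinate $j$, the leaf $L_b(\bx)$ is split a number of times that grows to infinity in probability. We then show that each split shrinks the side length along $j$ by a factor bounded away from one.

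\textbf{Step 1: many splits along the path.} Consider the path from the root to $L_b(\bx)$. Specification \ref{spec:one} imposes balanced splits: each child keeps at least a fraction $\gamma>0$ of the parent's observations. Hence each split removes at least a fraction $\gamma$ of the points, and a node at depth $d$ contains at most $(1-\gamma)^d s_n$ points. Trees are grown until leaves cannot be split without violating the minimum leaf size $k_n$, and after the pruning step the leaf still contains of order $k_n$ points. The depth of $L_b(\bx)$ is therefore at least $c\log(s_n/(2k_n-1))/\log((1-\gamma)^{-1})$, which tends to infinity because $k_n/s_n\to 0$. Since each variable is selected with probability at least $\pi>0$ at every split, independently of the data through $\Theta_b$, the number of splits on coordinate $j$ stochastically dominates a Binomial$(\text{depth},\pi)$. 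This number also diverges in probability, uniformly over $j\in\{1,\dots,p\}$.

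\textbf{Step 2: each split shrinks the side length geometrically.} This is the main obstacle, because balance is stated in terms of sample counts, not lengths. Under Assumption \ref{asm:three}, the density is bounded above and below by $f_{\max}$ and $f_{\min}$. A child containing at most a $(1-\gamma)$ fraction of the parent's points should then have at most a $(1-\gamma')$ fraction of its volume, hence of its $j$-th side length, for a fixed $\gamma'>0$, except on events of vanishing probability.

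To make this rigorous, we follow \citet{wager2017estimationinferenceheterogeneoustreatment}. They use a uniform concentration bound, over all axis-aligned rectangles, of empirical mass around $f$-mass. Using VC arguments, this bound holds simultaneously for all rectangles with expected count at least of order $k_n$, and it is applied first to $\mathcal{I}_b$ and then to $\mathcal{I}'_b$ for the pruning step. The requirement $k_n\to\infty$, for instance $k_n\gg\log s_n$ in the original, controls the failure probability. On the good event, the volume ratio is controlled by $f_{\max}/f_{\min}$ times the count ratio. Since the split is along $j$, this yields the side-length contraction.

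\textbf{Step 3: conclusion.} Combining the two steps, the $j$-th side length of $L_b(\bx)$ is at most $\mathrm{diam}(\mathcal{X})(1-\gamma')^{N_j}$, where $N_j$ is the number of splits on coordinate $j$ and $N_j\to\infty$ in probability. A union bound over the $p$ coordinates then gives $\mathrm{diam}(L_b(\bx))\to 0$ in probability. The honest pruning uses only the inputs $\bX_i$ and only removes bottom splits, so it affects Step 1 only through the lower bound on depth, which we already account for.
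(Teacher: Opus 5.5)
Your route matches the paper's. You lower-bound the depth of $L_b(\bx)$ using balance and the stopping rule, get divergence of the number of splits on every coordinate from the selection probability, and get geometric shrinkage of side lengths from balanced splits. The paper delegates the last step to Lemma~2 of \citet{meinshausen2006quantile}; you spell out the count-to-volume conversion in the style of \citet{wager2017estimationinferenceheterogeneoustreatment}, which is more informative.

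\textbf{Step~1 is inverted.} That a node at depth $d$ contains \emph{at most} $(1-\gamma)^d s_n$ points, together with a leaf of order $k_n$ points, gives $k_n \lesssim (1-\gamma)^d s_n$. That is an \emph{upper} bound $d \lesssim \log(s_n/k_n)/\log((1-\gamma)^{-1})$ on the depth, which is useless here. You need the other side of balance. Each child keeps at least a fraction $\gamma$ of its parent's points, so a node at depth $d$ contains \emph{at least} $\gamma^d s_n$ points ($\gamma^d s_n/2$ with the honest half-sample). The leaf contains at most $2k_n-1$ points, since otherwise it could be split again. Hence $d \geq \log(s_n/(2k_n-1))/\log(\gamma^{-1})$, which is the paper's inequality. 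The factor $\log((1-\gamma)^{-1})$ belongs to the side-length contraction, not to the depth count.

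\textbf{Step~2 needs two repairs.} First, a uniform concentration bound over all axis-aligned rectangles with expected count of order $k_n$ only has vanishing failure probability when $k_n \gtrsim \log s_n$. Specification~\ref{spec:one} gives only $k_n \to \infty$, so you cannot import that rate. The fix is essentially what \citet{wager2017estimationinferenceheterogeneoustreatment} do by excluding the last few ancestors of the leaf: apply concentration only to ancestors of $L_b(\bx)$ containing at least $m_n$ points, with $\max(k_n, \log s_n) \ll m_n \ll s_n$.
\begin{itemize}
\item By the corrected Step~1 there are at least $\log(s_n/m_n)/\log(\gamma^{-1}) - O(1) \to \infty$ such ancestors, and the Binomial argument applies to their splits alone.
\item The same concentration applied to $\mathcal{I}'_b$ shows these splits survive pruning with high probability. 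This also justifies your claim (and the paper's equally brief one) that the pruned leaf still has diverging depth.
\end{itemize}
Second, the comparison ``volume ratio controlled by $f_{\max}/f_{\min}$ times the count ratio'' must be applied to the sibling, not to the child. Applied to the child it gives $(f_{\max}/f_{\min})(1-\gamma)$ for the child's volume fraction, which may exceed one. The sibling instead carries a mass fraction at least $\gamma - o(1)$, hence a volume fraction at least $\gamma f_{\min}/f_{\max} - o(1)$. So $\gamma' = \gamma f_{\min}/(2 f_{\max})$ works for large $n$, giving the side-length contraction along the split coordinate.
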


\begin{proof}[Proof of Lemma \ref{lemma:diam}]
    According to Specification \ref{spec:one}, the minimal leaf size is $k_n$. By construction, tree cells are split until the leaf size constraint cannot be satisfied with an additional split. Consequently, the number of points $N_{n, b}(\bx)$ in the final leaf where $\bx \in \mathcal{X}$ falls in the $b$-th tree, satisfies
    \begin{align*}
        N_{n, b}(\bx) < 2 k_n.
    \end{align*}
    Otherwise, it is possible to further split the terminal leaf.

    From Specification \ref{spec:one}, tree splits are balanced, which means that at least a fraction $\gamma > 0$ of the observations of the parent node goes in each child node. If $c_n(\bx)$ is the number of splits made to reach the terminal leaf containing $\bx$, we have
    \begin{align*}
        \frac{s_n}{2} \gamma^{c_n(\bx)} \leq N_{n, b}(\bx).
    \end{align*}

    Combining the last two inequalities, we get
    \begin{align*}
        c_n(\bx) > \frac{\log(s_n / (4 k_n))}{\log(1 / \gamma)}.
    \end{align*}
    According to Specification \ref{spec:one}, $s_n / k_n \to \infty$, and thus, $c_n(\bx) \to \infty$ in probability. Additionally, the probability to split on any input variable is strictly positive, which means that the number of splits on each variable to reach the cell $L_b(\bx)$ also grows to infinity.
    Notice that this result still holds when the pruning step described above is applied. Indeed, the terminal leaves containing $\bx$ in the pruned tree still have a depth growing to infinity.

    Given this growing number of splits on each direction, and that splits are constrained to be away from the edge of cells (balanced splits), it follows that $\mathrm{diam}(L_b(\bx)) \rightarrow 0$ in probability, as shown in the proof of Lemma $2$ from \citet{meinshausen2006quantile}.
    
\end{proof}

\subsection{Proofs of Theorems \ref{th:two} and \ref{th:three}}

\begin{proof}[Proof of Theorem \ref{th:two}.]
    Since $\tau_n$ is the additive causal forest, which takes the form of an additive tree ensemble as regression forests, we can directly apply Theorem $2$ from TreeHFD article to obtain the result. Notice that the constant $1/B$ is not absorbed in the tree definition here.
\end{proof}

\begin{proof}[Proof of Theorem \ref{th:three}.]
    We first apply Theorem \ref{th:one} to state the convergence in probability of the additive causal forest $\tau_n$. According to Assumption \ref{asm:one}, $\tau$ is a Lipschitz function (as the difference of Lipschitz functions), and since $\tau$ is defined over the hyperrectangle $\mathcal{X}$, which is a compact set, $\tau$ is bounded. Additionally, the trees $\tau_{n,b}$ are bounded by assumption, and therefore the $\mathrm{L}^2$-consistency of the additive causal forest holds, that is
    \begin{align*}
        \mathbb{E}[(\tau_n(\bX) - \tau(\bX))^2] \xrightarrow[n \to \infty]{} 0.
    \end{align*}
    Then, all assumptions of Corollary $1$ of TreeHFD article are satisfied, and we can directly apply the corollary to get the final result.
\end{proof}

\subsection{Proofs of Theorem \ref{thm:causaltreehfd_bound} and Corollary \ref{cor:convergence}} \label{app:proof_thm5}

We define $f_{n,b}$ as the empirical density of the inputs computed over the Cartesian tree partition $\mathcal{A}_b$ of tree $b$ with $\Dn$, and $\smash{f_{n,b}^{(J)}}$ is the marginal density for variables in $J \in \mathcal{S}$. 

\begin{proof}[Proof of Theorem \ref{thm:causaltreehfd_bound}]
    Theorem \ref{thm:causaltreehfd_bound} extends Theorem $6$ from TreeHFD article \citep{benard2025treehfd}, where the bound is now applied to the empirical estimates output by CausalTreeHFD $\{ \tau_n^{(J)} \}_{J \in \mathcal{S}}$, instead of the theoretical HFD adapted to tree ensembles $\{\tau_n^{\star (J)}\}_{J \in \mathcal{S}}$ of the fitted causal forest.
    
    Hence, the core of the proof is based on Lemma \ref{lemma:emp_hfd} to state the equivalence between the minimization of $\mathcal{R}_{n,b}$ performed by CausalTreeHFD and the HFD of $\tau_{n,b}$ with respect to the truncated empirical densities $f_{n,b}$, computed over the Cartesian tree partitions.
    Then, the proof follows TreeHFD article, using the extended Lemmas \ref{lemma:HFD_equivalence} and \ref{lemma:bound_tree_HFD} below.
\end{proof}

\begin{lemma} \label{lemma:emp_hfd}
    Let Assumption \ref{asm:three} and Specification \ref{spec:causaltreehfd} be satisfied.
    The CausalTreeHFD estimate $\{\tau_{n,b}^{(J)}\}_{J \in \mathcal{S}}$, defined as the minimizer of $\mathcal{R}_{n,b}$ for each tree, is the HFD of $\tau_{n,b}$ with respect to the truncated empirical density $f_{n,b}$, and therefore satisfies for all trees $b \in \{1, \hdots, B\}$,
    \begin{align*}
        \tau_{n,b}(\bX) = \sum_{J \in \mathcal{S}} \tau_{n,b}^{(J)}(\bX^{(J)}),
    \end{align*}
    and for all $J \in \mathcal{S}$, all $j \in J$, and all $A \in \mathcal{A}_b^{(J \setminus j)}$,
    \begin{align} \label{eq:emp_ortho}
        \sum_{B \in \mathcal{A}_b^{(J)}: B^{(J \setminus j)} = A} \tau_{n,b}^{(J)}(B) p_n(B) = 0,
    \end{align}
    where $\tau_{n,b}^{(J)}(B)$ is the value of $\tau_{n,b}^{(J)}$ in cell $B$, and $p_n(B)$ is the empirical probability that an input belongs to $B$, given by $p_n(B) = f_{n,b}^{(J)}(B) \mathrm{Vol}(B)$.
\end{lemma}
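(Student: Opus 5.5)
The plan is to show that, once the empirical density is truncated and normalized as in Specification \ref{spec:causaltreehfd}, minimizing $\mathcal{R}_{n,b}$ reduces to a finite-dimensional least squares problem. This problem has zero minimal value, attained only at the discretized HFD of $\tau_{n,b}$ with respect to $f_{n,b}$. I fix a tree $b$ and work conditionally on $\Dn$, so that $\tau_{n,b}$ and the partitions $\mathcal{A}_b^{(J)}$ are deterministic.

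\textbf{Step 1: finite-dimensional reparametrization.} Every candidate $\mu_b^{(J)}$ is piecewise constant on $\mathcal{A}_b^{(J)}$. Since $\tau_{n,b}$ is constant on the cells of the full Cartesian partition $\mathcal{A}_b^{(\{1,\dots,p\})}$, both terms of $\mathcal{R}_{n,b}$ are explicit quadratic forms in the cell values. The first term becomes $\sum_{C} p_n(C)\,(\tau_{n,b}(C) - \sum_J \mu_b^{(J)}(C^{(J)}))^2$. The penalty terms become $(\sum_{B: B^{(J\setminus j)}=A} \mu_b^{(J)}(B)\, p_n(B))^2$, because under $f_{n,b}$ the expectation $\E[\mu_b^{(J)}\mathds{1}_{\{\bX^{(J\setminus j)}\in A\}}]$ is exactly this sum. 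Marginals of the empirical density over coarser Cartesian partitions are again sums of cell probabilities, so $p_n(B)=f_{n,b}^{(J)}(B)\mathrm{Vol}(B)$ is consistent with this computation.

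\textbf{Step 2: existence of a zero-loss point.} Because $f_{n,b}$ is bounded above and below by positive constants after truncation, the discrete measure $\sum_C p_n(C)\delta_C$ has full support on the product of the finite index sets of the Cartesian partition. I then invoke Theorem \ref{thm:hfd}, or more directly Theorem $2$ of the TreeHFD article as used for Theorem \ref{th:two}, with $f$ replaced by the piecewise constant density $f_{n,b}$. This density satisfies Assumption \ref{asm:three} up to the Lipschitz requirement. That requirement is only needed for general functions, and here it can be bypassed because we work on finite product spaces, where the HFD of Stone/Chastaing exists for any strictly positive product-space distribution. The result is a decomposition $\{\tau_{n,b}^{\dagger(J)}\}$ that is piecewise constant on the $\mathcal{A}_b^{(J)}$, sums to $\tau_{n,b}$, and satisfies the discretized orthogonality constraints $\mathbb{E}_{f_{n,b}}[\tau^{\dagger(J)}\mid \bX^{(J\setminus j)}\in A]=0$ for every $j\in J$ and every cell $A$. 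Multiplying by $p_n(A)>0$ gives \eqref{eq:emp_ortho}, so $\mathcal{R}_{n,b}(\{\tau^{\dagger(J)}\})=0$.

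\textbf{Step 3: uniqueness of the minimizer.} Since $\mathcal{R}_{n,b}\geq 0$, every minimizer has zero loss. A minimizer therefore satisfies exact reconstruction $p_n$-a.e. (and hence everywhere on cells, by positivity) together with all constraints \eqref{eq:emp_ortho}. Orthogonality with respect to single-variable removals $J\setminus j$ implies orthogonality with respect to every strict subset $I\subsetneq J$ by iterated conditioning. So the minimizer is a hierarchically orthogonal decomposition of $\tau_{n,b}$ under $f_{n,b}$, and by uniqueness of the HFD it coincides with $\tau^{\dagger}$. This identifies the CausalTreeHFD estimate with the HFD and yields both displayed properties.

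\textbf{Main obstacle.} The delicate point is the iterated-conditioning step, together with the claim that the HFD exists for the discretized, possibly non-Lipschitz density. The discrete measure is not a product measure, so I would first try the Stone/Chastaing argument: positive lower and upper bounds on the density make the sum of the orthogonal subspaces closed and direct, which gives existence and uniqueness. The need to truncate and normalize the density, as required by Specification \ref{spec:causaltreehfd}, is what makes this work.
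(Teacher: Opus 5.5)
Your proposal is correct and follows essentially the same route as the paper. You get existence and uniqueness of the HFD of $\tau_{n,b}$ under the truncated piecewise-constant density from the Chastaing/Stone argument, show that this decomposition attains zero loss, and then identify any zero-loss point with it via the equivalence between the $J\setminus j$ constraints and those for all $I\subsetneq J$; the paper cites that equivalence as Hooker's Lemma 4.1, while you obtain it by aggregating cells. The only cosmetic difference is that the paper's explicit loss weights each penalty term by $p_n(A)^{-1}$, which is harmless since these weights are strictly positive under the truncation.
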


\begin{proof}[Proof of Lemma \ref{lemma:emp_hfd}]
    According to Specification \ref{spec:causaltreehfd}, the empirical densities $f_{n,b}$ are truncated from above and below. Therefore, we can apply Theorem $1$ from \citet{chastaing2012hoeffding} to state the HFD $\smash{\{\tau_{n,b}^{(J)}\}_{J \in \mathcal{S}}}$ of each tree $\tau_{n,b}$ with respect to the empirical density $f_{n,b}$, which is unique and satisfies the empirical orthogonality conditions of Equation (\ref{eq:emp_ortho}). Notice that because trees and empirical densities are constant on each cell of the Cartesian tree partition, it is equivalent to state the orthogonality conditions conditional on each cell, as shown in the proof of Theorem $2$ of TreeHFD article.
    
    By definition of TreeHFD algorithm, the loss $\mathcal{R}_{n,b}$ is defined for each tree $b$ and a set of parametrized functions $\smash{\{\mu_{b}^{(J)}\}}$ (piecewise constant on the Cartesian tree partition) by
    \begin{align*}
        \mathcal{R}_{n,b} (\{ \mu_{b}^{(J)} \}_{J \in \mathcal{S}})
        = &\sum_{A \in \mathcal{A}_b} p_n(A) \big(\tau_{n, b}(A) - \sum_{J \in \mathcal{S}} \mu_{b}^{(J)}(A^{(J)})\big)^2
          \\ &+ \sum_{J \in \mathcal{S}} \sum_{j \in J}  \sum_{A \in \mathcal{A}_b^{(J \setminus j)}} 
          p_n(A)^{-1} \times \Big(\sum_{B \in \mathcal{A}_b^{(J)}: B^{(J \setminus j)} = A} \mu_{b}^{(J)}(B) p_n(B)\Big)^2.
    \end{align*}
    Notice that according to Specification \ref{spec:causaltreehfd}, the loss of CausalTreeHFD $\mathcal{R}_{n,b}$ is built with all variable subsets $J \in \mathcal{S}$ and with the truncated empirical density $f_{n,b}$ of tree $b$. This means that empty cells are also included in $\mathcal{R}_{n,b}$ with weights defined from the truncating constants of the density and cell volumes.
    Then, it is clear that 
    \begin{align*}
        \mathcal{R}_{n,b}(\{\tau_{n,b}^{(J)}\}_{J \in \mathcal{S}}) = 0.
    \end{align*} 
    When the orthogonality constraints are violated or the decomposition does not add up to $\tau_{n,b}$, we have $\mathcal{R}_{n,b} > 0$, since Lemma 4.1 from \citet{Hooker01092007} gives the equivalence of the orthogonality conditions for $j \in J$ and $I \subsetneq J$. Consequently, $\smash{\{\tau_{n,b}^{(J)}\}_{J \in \mathcal{S}}}$ is the unique minimizer of $\mathcal{R}_{n,b}$, output by CausalTreeHFD.
\end{proof}

\begin{lemma} \label{lemma:HFD_equivalence}
    If Assumption \ref{asm:three} and Specification \ref{spec:causaltreehfd} are satisfied, and $\{\mu_{n,b}^{(J)}\}_J$ is the HFD of $\tau_{n,b}(\bX)$ with respect to the theoretical density $f$ of $\bX$, then we have for $J \in \mathcal{S}$,
    \begin{align*}
        \tau_{n,b}^{(J)} = \mu_{n,b}^{(J)} \frac{f^{(J)}}{f_{n,b}^{(J)}} - \delta_{n,b}^{(J)},
    \end{align*}
    where $\{\delta_{n,b}^{(J)}\}_J$ is the HFD of $\delta_{n,b}$ with respect to the empirical density $f_{n,b}$, with
    \begin{align*}
        \delta_{n,b} = \sum_{J \in \mathcal{S}} \mu_{n,b}^{(J)} \Big(\frac{f^{(J)}}{f_{n,b}^{(J)}} - 1\Big).
    \end{align*}
\end{lemma}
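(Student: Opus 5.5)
We need to prove Lemma \ref{lemma:HFD_equivalence}: the empirical HFD components equal $\mu_{n,b}^{(J)} f^{(J)}/f_{n,b}^{(J)} - \delta_{n,b}^{(J)}$.

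The plan is to exhibit an explicit decomposition of $\tau_{n,b}$ and check that it satisfies the two defining properties of the HFD with respect to $f_{n,b}$. We then conclude by the uniqueness given in Lemma \ref{lemma:emp_hfd}, equivalently Theorem 1 of \citet{chastaing2012hoeffding} applied to the truncated density, which is bounded above and below by Specification \ref{spec:causaltreehfd}.

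Set $g^{(J)} = \mu_{n,b}^{(J)} f^{(J)}/f_{n,b}^{(J)}$. The candidate components are $g^{(J)} - \delta_{n,b}^{(J)}$.

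\emph{Additivity.} Summing the candidate components and using the identity defining $\delta_{n,b}$ gives
\begin{align*}
\sum_J \big(g^{(J)} - \delta_{n,b}^{(J)}\big) = \sum_J g^{(J)} - \delta_{n,b} = \sum_J \mu_{n,b}^{(J)} = \tau_{n,b},
\end{align*}
where the last equality is the HFD of $\tau_{n,b}$ under $f$.

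\emph{Orthogonality.} By linearity, and since $\{\delta_{n,b}^{(J)}\}$ is by definition an $f_{n,b}$-HFD, each $\delta_{n,b}^{(J)}$ is already hierarchically orthogonal under $f_{n,b}$. It remains to show that each $g^{(J)}$ is. For $j \in J$ and fixed $\bx^{(J\setminus j)}$,
\begin{align*}
\int g^{(J)} f_{n,b}^{(J)}\, dx^{(j)} = \int \mu_{n,b}^{(J)} f^{(J)}\, dx^{(j)} = 0,
\end{align*}
where the last equality is the orthogonality of $\mu_{n,b}^{(J)}$ under $f$. By Lemma 4.1 of \citet{Hooker01092007}, this integral condition over single coordinates is equivalent to the full family of conditions $\E[\,\cdot \mid \bX^{(I)}] = 0$ for $I \subsetneq J$. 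The density ratio is therefore exactly what cancels the change of measure.

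\emph{Main obstacle.} The delicate points are measure-theoretic and structural rather than algebraic.

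First, $f_{n,b}^{(J)}$ must be the marginal of the truncated, normalized joint $f_{n,b}$, so that the orthogonality integral is taken against the right measure. It must also be strictly positive, so that $g^{(J)}$ is well defined and square-integrable; truncation from below guarantees both. The paper's notation $f_{n,b}^{(J)}$ already means this marginal.

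Second, the uniqueness class matters. The HFD under $f_{n,b}$ is unique among square-integrable functions, and the candidate $g^{(J)}$ need not be piecewise constant on $\mathcal{A}_b^{(J)}$, because $f^{(J)}$ varies within cells. This is harmless: uniqueness in Theorem 1 of \citet{chastaing2012hoeffding} holds in $L^2(f_{n,b})$, so identifying the candidate with the minimizer of Lemma \ref{lemma:emp_hfd} requires no piecewise-constant structure.

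Finally, $\delta_{n,b}$ should be checked to be square-integrable under $f_{n,b}$. This follows from boundedness of $\tau_{n,b}$ (the trees are truncated) and the density bounds, so its HFD exists.
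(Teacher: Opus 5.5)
Your proposal is correct and follows essentially the same route as the paper. The paper invokes Lemma~\ref{lemma:emp_hfd} to identify $\tau_{n,b}^{(J)}$ with the HFD of $\tau_{n,b}$ under $f_{n,b}$, and then transposes the TreeHFD Lemma~4 argument: additivity comes from the definition of $\delta_{n,b}$, orthogonality from the pointwise identity $g^{(J)} f_{n,b}^{(J)} = \mu_{n,b}^{(J)} f^{(J)}$, and uniqueness concludes. Your remarks on square-integrability, and on uniqueness holding in all of $L^2(f_{n,b})$ rather than only among piecewise-constant functions, spell out details the paper leaves to the cited reference.
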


\begin{proof}[Proof of Lemma \ref{lemma:HFD_equivalence}]
    We first apply Lemma \ref{lemma:emp_hfd} to state the HFD $\smash{\{\tau_{n,b}^{(J)}\}_{J \in \mathcal{S}}}$ of $\tau_{n,b}$ with respect to $f_{n,b}$.
    Then, the proof exactly follows the proof of Lemma $4$ in TreeHFD article, where the theoretical discretized distribution $f_{\ell}$ is replaced by $f_{n,b}$, and all expectations are taken conditional on the training data $\Dn$.
\end{proof}

\begin{lemma} \label{lemma:bound_tree_HFD}
     If Assumption \ref{asm:three} and Specification \ref{spec:causaltreehfd} are satisfied, then there exists a constant $K >0$ such that for any tree $\tau_{n, b}$, we have
     \begin{align*}
         \sum_{J \in \mathcal{S}} \E[(\tau_{n, b}^{(J)}(\bX^{(J)}) - \mu_{n, b}^{(J)}&(\bX^{(J)}))^2 | \Dn]  \leq K \Delta_{n, f}^2 \E[\tau_{n, b}(\bX)^2  | \Dn].
     \end{align*}
\end{lemma}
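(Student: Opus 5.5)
The plan is to start from the identity of Lemma \ref{lemma:HFD_equivalence}. For each $J \in \mathcal{S}$ it gives
\[
\tau_{n,b}^{(J)} - \mu_{n,b}^{(J)} = \mu_{n,b}^{(J)}\Big(\frac{f^{(J)}}{f_{n,b}^{(J)}} - 1\Big) - \delta_{n,b}^{(J)} .
\]
The argument works conditionally on $\Dn$, so $\tau_{n,b}$ and the partitions $\mathcal{A}_b$ are fixed, and it bounds the two terms on the right separately in $\mathrm{L}^2(f)$.

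\emph{First term.} By Specification \ref{spec:causaltreehfd}, $f_{n,b}$ is truncated below by a positive constant. By Assumption \ref{asm:three}, $f$ is bounded above and below. Together these give
\[
\Big|\frac{f^{(J)}}{f_{n,b}^{(J)}} - 1\Big| = \frac{|f^{(J)} - f_{n,b}^{(J)}|}{f_{n,b}^{(J)}} \leq c\,\Delta_{n,f}.
\]
Here the sup-distance between marginals is controlled by the sup-distance between joint densities, because we integrate over a bounded hyperrectangle; the constant absorbs the volume. The first term is therefore bounded by $c^2\Delta_{n,f}^2\,\E[\mu_{n,b}^{(J)}(\bX^{(J)})^2\mid\Dn]$. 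Since $\{\mu_{n,b}^{(J)}\}_J$ is the HFD of $\tau_{n,b}$ under $f$, the standard HFD stability result for densities bounded above and below applies (Stone 1994, or Chastaing et al.'s Theorem 1). It gives $\sum_J \E[\mu_{n,b}^{(J)}(\bX^{(J)})^2\mid\Dn] \leq K_0\,\E[\tau_{n,b}(\bX)^2\mid\Dn]$, with $K_0$ depending only on the density bounds and $p$.

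\emph{Second term.} $\{\delta_{n,b}^{(J)}\}_J$ is the HFD of $\delta_{n,b}$ with respect to $f_{n,b}$. The same stability inequality applies under $f_{n,b}$, because its bounds come from the truncation constants and are uniform in $n$ and $b$. Switching the norm from $f_{n,b}$ to $f$ costs only another constant. This gives $\sum_J\E[\delta_{n,b}^{(J)}(\bX^{(J)})^2\mid\Dn] \leq K_0'\,\E[\delta_{n,b}(\bX)^2\mid\Dn]$. By the definition of $\delta_{n,b}$, the triangle inequality and the pointwise estimate above,
\[
\E[\delta_{n,b}(\bX)^2\mid\Dn] \leq 2^p c^2\Delta_{n,f}^2\sum_J\E[\mu_{n,b}^{(J)}(\bX^{(J)})^2\mid\Dn] \leq 2^p c^2 K_0\,\Delta_{n,f}^2\,\E[\tau_{n,b}(\bX)^2\mid\Dn].
\]

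Adding the two bounds with $(a+b)^2\le 2a^2+2b^2$ and summing over $J$ gives the claim with an explicit $K$.

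\textbf{Main obstacle.} The delicate step is the uniform HFD stability constant, i.e.\ the inequality $\sum_J \|g^{(J)}\|^2 \le K_0\|g\|^2$, which must hold uniformly over the data-dependent piecewise-constant densities $f_{n,b}$. First I would derive it from Chastaing et al.'s angle/Stone condition. That condition depends only on the ratio of the upper and lower density bounds, which Specification \ref{spec:causaltreehfd} fixes. Then I would check that the normalization after truncation does not break those bounds.
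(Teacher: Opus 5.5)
Your proposal is correct and follows essentially the paper's route. The paper's proof only points to Lemma 5 of the TreeHFD article, applied through the representation of Lemma \ref{lemma:HFD_equivalence}, with $\Delta_{n,f}$ controlling the density-ratio factors cell by cell. Your argument is a fleshed-out version of that: the pointwise bound on $f^{(J)}/f_{n,b}^{(J)}-1$, plus the Stone/Chastaing HFD stability inequality, which holds uniformly because Specification \ref{spec:causaltreehfd} fixes the truncation constants.
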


\begin{proof}[Proof of Lemma \ref{lemma:bound_tree_HFD}]
    The proof follows the proof of Lemma $5$ in TreeHFD article, where $\tau_{n, b}^{(J)}$ is used instead of $\smash{\tau_{n, b}^{\star (J)}}$ thanks to Lemma \ref{lemma:HFD_equivalence}, and the empirical quantity $\Delta_{n, f}$ is used to bound the maximum difference between the density $f$ of $\bX$ and the empirical density $f_{n,b}$ within each cell of the Cartesian tree partitions.
\end{proof}

\begin{proof}[Proof of Corollary \ref{cor:convergence}]
    We apply Theorem \ref{thm:causaltreehfd_bound} to get for all $J \in \mathcal{S}$, 
    \begin{align*}
        \E[(\tau_n^{(J)}(\bX^{(J)})& - \tau^{(J)}(\bX^{(J)}))^2] \leq K_1 \E[( \tau(\bX) - \tau_n(\bX) )^2] \hspace*{4mm} + K_2 \E\Big[ \Delta_{n, f}^2 \Big( \frac{1}{B} \sum_{b=1}^B \sqrt{\E[\tau_{n, b}(\bX)^2|\Dn]}\Big)^2 \Big].
    \end{align*}
    Then, Theorem \ref{th:one} gives the convergence of causal forests, and the first term of the right hand side of the above inequality vanishes, since trees are bounded according to Specification \ref{spec:causaltreehfd}.
    
    The second term also vanishes, since trees are bounded, and by assumption, $\Delta_{n, f} \to 0$ in probability, which implies that $\E[\Delta_{n, f}^2] \to 0$, since $\Delta_{n, f}$ is bounded because $f$ is bounded from Assumption \ref{asm:three}, and $f_{n,b}$ is bounded from Specification \ref{spec:causaltreehfd}.

    For the second part of Corollary \ref{cor:convergence}, we first consider $\{\tau^{(J)}\}_{J \subset \mathcal{H}}$ the HFD of $\tau(\bX^{(\mathcal{H})})$ given by Theorem \ref{thm:hfd}. Then, we add null components for all subsets of variables of $S$ that are not included in $\mathcal{H}$, which satisfy the orthogonality conditions by construction. By uniqueness of the HFD, this augmented HFD is therefore the HFD of $\tau$ with respect to $\bX$, with $\tau^{(J)} = 0$ for all subsets $J \not \subset \mathcal{H}$.
    This last property combined with the first part of the corollary provides the second part of the corollary, that is, for all subsets $J \not \subset \mathcal{H}$,
    \begin{align*}
        \E[\tau_n^{(J)}(\bX^{(J)})^2] \xrightarrow[n \to \infty]{} 0.
    \end{align*}
\end{proof}

\end{document}